\newcommand{\eat}[1] {}
\newcommand{\QED} {\hfill$\Box$}
\newtheorem{theorem}{\bf Theorem}[section]
\newtheorem{corollary}[theorem]{\bf Corollary}
\newtheorem{lemma}[theorem]{\bf Lemma}
\newtheorem{claim}[theorem]{\bf Claim}
\newcommand{\lp}{\mathrm{LP}}
\newcommand{\cp}{\mathrm{CP}}
\newcommand{\mts}{\mathrm{MTS}}
\newcommand{\mwu}{\mathrm{MWU}}
\newcommand{\wtxit} {\widetilde{x_{i,t}}}
\newcommand{\wtxt}{\widetilde{x_t}}
\newcommand{\omd}{{\mathrm{OMD}}}
\newcommand{\oco}{{\mathrm{OCO}}}
\newcommand{\leqs}{\leqslant}
\newcommand{\geqs}{\geqslant}
\newcommand{\zerola}{\mathrm{0LA}}
\newcommand{\onela}{\mathrm{1LA}}
\newcommand{\cL}{{\cal L}}
\newcommand{\bigo}{{\cal O}}
\newcommand{\eps}{\epsilon}
\newcommand{\opt}{\mathrm{OPT}}
\newcommand{\nocomma}{}
\newcommand{\nosymbol}{}
\newcommand{\tmmathbf}[1]{\ensuremath{\boldsymbol{#1}}}
\newcommand{\tmop}[1]{\ensuremath{\operatorname{#1}}}
\newcommand{\tmtextbf}[1]{{\bfseries{#1}}}
\newcommand{\tmtextit}[1]{{\itshape{#1}}}
\title{Fenchel Duals for Drifting Adversaries}
\author{Suman K Bera\thanks{Indian Institute of Technology Delhi. Email: \textrm{sumankalyanbera@gmail.com}.},
~ Anamitra R Choudhury\thanks{IBM Research, Delhi. Email: \textrm{anamchou@in.ibm.com}},
~ Syamantak Das\thanks{Indian Institute of Technology Delhi. Email: \textrm{sdas@cse.iitd.ac.in}.},
~ Sambuddha Roy\thanks{IBM Research, Delhi. Email: \textrm{shombuddho@gmail.com}}
~and~ Jayram S. Thatchachar\thanks{IBM Research, Almaden. Email: \textrm{jayram@us.ibm.com}. }}
\begin{document}

\maketitle

\begin{abstract}
We describe a primal-dual framework for the design and analysis of online convex optimization algorithms for {\em drifting regret}. Existing literature shows (nearly) optimal drifting regret bounds only for the $\ell_2$ and the $\ell_1$-norms. Our work provides a connection between these algorithms and the Online Mirror Descent ($\omd$) updates; one key insight that results from our work is that in order for these algorithms to succeed, it suffices to have the gradient of the regularizer to be bounded (in an appropriate norm). For situations (like for the $\ell_1$ norm) where the vanilla regularizer does not have this property, we have to {\em shift} the regularizer to ensure this. Thus, this helps explain the various updates presented in \cite{bansal10, buchbinder12}. We also consider the online variant of the problem with $1$-lookahead, and with movement costs in the $\ell_2$-norm. Our primal dual approach yields nearly optimal competitive ratios for this problem.
\end{abstract}

\section{Introduction}
\label{sec:intro}
%Start with the problem of online convex optimization (OCO). 
The problem of online learning considers the task of a decision maker who needs to iteratively make decisions 
in the face of uncertainty. In each round or iteration, the decision maker picks a certain action from an action set (or feasible set). It is only after this, that the cost function (for this iteration) is revealed to the decision maker, and the decision maker then suffers the penalty indicated by the cost function on the specific choice made by him in this round. The overall objective of the decision maker is to {\em minimize} the {\em regret} where regret is the difference between the total cost of the decision maker and the  cost of an arbitrary {\em fixed} policy. 
%An excellent reference for this area is \cite{bianchi06}.
Online convex optimization ($\oco$) is the specific instance of the above model wherein the incumbent cost functions are {\em convex}. The seminal paper of Zinkevich~\cite{zinkevich03} achieves (the optimal) 
 ${\bigo}(\sqrt{T})$ regret bounds against fixed adversaries, in the $\oco$ framework.

However, it has often been pointed out that the assumption of a {\em fixed} policy for the adversary may not be a valid one and it is too much to expect for a single policy to perform well. Recently there has been work in the direction of adversaries that may {\em drift} (i.e. change their policy). Thus, a generalization of the regret model is one in which the adversaries are allowed to drift to a certain extent. Prior work has considered variants of this theme, for instance {\em adaptive} regret~\cite{hazan09}, tracking the best expert~\cite{herbster98} or settings where the adversary may change her policy only a certain number of times ({\em shifting} regret; see \cite{bianchi12, crammer10}) etc. It may be noted that the model of regret is relatively {\em memoryless}: the cost in any iteration is dependent on {\em only} the current configuration of the decision maker.

Another emerging line of work has focused on settings where there are {\em states}; thus, the costs in any specific iteration depend on all of the decision maker's {\em past} configurations (see  \cite{buchbinder12}). Thus, the decision maker needs to compete against {\em arbitrary} policies, including the optimal offline policy, $\opt$, that has access to all of the cost functions, and where we can model states. This problem belongs to the realm of {\em competitive analysis} (cf. the text on online algorithms,~\cite{borodin98}). Since the offline optimum $\opt$ is significantly powerful in this model, achieving {\em additive} regret bounds is hard in general. Instead, the measure used is the {\em competitive ratio} that is inherently a {\em multiplicative} factor between the total cost incurred by the decision maker as against the cost of $\opt$.

In order to model states, the competitive analysis framework assumes a {\em movement cost} (also referred to as {\em switching costs}) between successive configurations of the decision maker. More significantly, this framework assumes $1$-lookahead (denoted as $\onela$): the decision maker knows the cost vector in the current iteration. In contrast, the framework of regret assumes $0$-lookahead (denoted as $\zerola$) in which the decision maker has to make the current prediction and only after that does she receive the cost vector for the current iteration. 

%One important caveat for the case of $\onela$ is the following. 

Since this framework involves {\em multiplicative} factors (instead of additive factors), it is necessary that  the cumulative cost incurred by the offline $\opt$ be {\em non-negative}. This implies two restrictions for $\onela$: one is that the individual cost functions are {\em non-negative} and secondly, the convex body in which the points are chosen has to lie entirely in the {\em non-negative} orthant of $\mathbb{R}^n$.

Recently, researchers have looked at the relation between these two frameworks. Work by Andrew et al.~\cite{barman13} show that a {\em single} algorithm {\em cannot} simultaneously achieve low regret (in the $\zerola$ model) and low competitive ratio (in the $\onela$ model) (also see a related paper by \cite{bianchi12} on switching costs).

Despite this lower bound, the work of Buchbinder et. al.~\cite{buchbinder12} shows an elegant connection between the two frameworks in the specific {\em experts/$\mts$ setting}.  They give a primal-dual based unified algorithmic approach that interpolates between the $\zerola$ problem of learning from experts and the  so-called {\em Metrical Task Systems} ($\mts$) problem: this latter being a $\onela$ problem. Their unified algorithm attains both {\em optimal} regret and competitive ratios, via {\em tuning} certain parameters; previous work in the same direction such as \cite{blum97, blum00} do not obtain optimal competitive ratios. They also obtain near-optimal results for the case of {\em drifting regret}. Buchbinder et al.~\cite{buchbinder12} also note that these interesting results are ``specific to the setting of experts/$\mts$", and ask for more general settings as online convex optimization ($\oco$), ``whether unifying analysis and algorithms exist in such cases as well."

Our current work seeks to address precisely this question.
\subsection{Our Contribution}
Our main focus is on {\em drifting adversaries}. 
We provide an unifying {\em primal-dual} framework to analyze regret (in the $\zerola$ case) for arbitrary norms, and competitive ratio (in the $\onela$ case). We observe that while a primal-dual framework had been introduced by Shalev-Schwarz and Singer~\cite{shalev06}, their framework does not capture the situation of {\em drift} because of the way the constraints for $\opt$ are modeled (a detailed discussion appears in Section~\ref{sec:oco}). Our primal dual framework proposes a methodical approach to formulate new updates in order to minimize drifting regret in arbitrary norms.  Our work is inspired by that of Buchbinder et. al.~\cite{buchbinder12}; however there are some salient differences between \cite{buchbinder12} and the current work. The work of \cite{buchbinder12} considers only the $\ell_1$ norm and thus gives $\lp$-based primal dual algorithms; in order to consider arbitrary norms, it will not suffice to consider $\lp$s.  We consider convex programs and propose primal-dual schema in order to analyze regret. The key contribution of our paper is that we demonstrate how our primal dual schema (and consequently, the $\lp$-based primal-dual schemas of \cite{buchbinder12}) are intimately connected to the Online Mirror Descent ($\omd$) style updates.
In prior work, the papers by \cite{abernethy10} and \cite{blum99} give competitive analysis algorithms using tools from online learning, for instance regularization. However, the role of regularizers in our current work is significantly different from their work.
We interpret the various algorithms provided in Bansal et al.~\cite{bansal10}, Buchbinder et al.~\cite{buchbinder12} in terms of $\omd$ updates; this considerably simplifies and augments the understanding of existing techniques. 
%In typical $\omd$ updates, there is a {\em mirror function} that is used in the update; this also corresponds to the (strongly convex) regularizer that is typically used for Online Convex Optimization ($\oco$). 

In the $\zerola$ case, we consider the problem of {\em drifting regret}. The key insight that results from our work is the following: in order for an algorithm to perform well wrt drifting regret, it is necessary that the {\em gradient} of the regularizer function be suitably bounded. Thus, in situations where this is not the case a priori, we need to {\em shift} the regularizer, so that this is achieved. 

For the $\onela$ case, we interpret the results of \cite{buchbinder12} in the framework of $\omd$; significantly simplifying the updates. We use the insight gained by this to prove competitive ratios for the case of $\onela$ with the movement cost being the $\ell_2$ distance between successive points chosen. 

Summarising our results:
\begin{itemize}
\item {\em $0$-lookahead:}
We prove drifting regret bounds for arbitrary norms and (slightly) improve existing drifting regret bounds of \cite{buchbinder12} for the $\ell_1$-norm. Since \cite{buchbinder12} reduces the problem of drifting regret to a $\onela$ problem, their bounds for drifting regret assume that the cost functions are {\em non-negative}. We remove this assumption by directly working with drifting regret. In passing, we also provide {\em optimal} guarantees for drifting regret for $\ell_1$-norm, thereby slightly improving the result of \cite{buchbinder12}. For the case of an arbitrary norm $\ell_p$, we prove our results for convex bodies that are $p$-balls. 
\item {\em $1$-lookahead:}
We prove {\em near-optimal} competitive ratios for the $\onela$ problem where the convex body is a $2$-ball, and the movement cost is the $\ell_2$-distance. We provide significant simplifications for the $\onela$ algorithm and primal-dual analysis for the $\mts$ problem considered in \cite{buchbinder12, blum97, blum00}. 
\item {\em Other Results:}
In  the realm of $\zerola$ problems, prior work (see Hazan \& Kale~\cite{hazan10}) has also considered the problem of obtaining regret bounds (against a fixed adversary) that are bounded by some measure of the {\em deviation} of the cost functions (instead of the number of rounds, $T$). Recently, a breakthrough paper by Chiang et al.~\cite{mahdavi12} proved optimal bounds for this problem against a fixed adversary.  Via our techniques, we extend their work  to achieve optimal bounds for the setting of drifting regret.

In the context of $\onela$ problems, we are also able to show optimal bounds when the cost functions are {\em smooth} (\cite{buchbinder12} only consider linear functions). The idea of shifting regularizers enables us to prove competitive ratios for the case where the movement costs are {\em arbitrary} norms, while the convex body is the {\em probability simplex}.

For lack of space, these proofs are deferred to the full version.
\end{itemize}

In ending, the overall conceptual message of our paper is that the problem of drifting adversaries {\em reduces to} the problem of bounding the {\em gradients} of the regularizer in the appropriate norm. 

\subsection{Organization}
We provide the necessary mathematical background and preliminaries in Section~\ref{sec:prelim}. In Section~\ref{sec:oco} we introduce our primal-dual framework and analysis by applying it to the $\oco$ problem for the $\ell_2$-norm. For the $\ell_1$-norm, Bansal et al.~\cite{bansal10} had given an $\lp$-based primal dual approach. We then consider the problem of drifting regret for the $\ell_1$-norm in Section~\ref{sec:drifting_expert}; this subsumes the approach of \cite{bansal10}. In Section~\ref{sec:2-norm1lookahead}, we present our results for the $\ell_2$-norm of the $\onela$ problem.
The simplification of the result of Buchbinder et al.~\cite{buchbinder12} for the $\ell_1$-norm of the $\onela$ problem appears in the Appendix.

\section{Preliminaries}
\label{sec:prelim}
We will let $\cdot$ denote the multiplication of scalars, while $\circ$ will denote the inner product of two vectors.
A
\textbf{norm} of a vector $\mathbf{x}$ is
denoted by $\| \mathbf{x} \|$. The \textbf{dual} \textbf{norm} is
defined as $\| \mathbf{w} \|_{\star} = \sup \{ \mathbf{x} \circ
\mathbf{w}: \| \mathbf{x} \| \leqslant 1 \}$. The most common norms that
will occur in this paper are the $\ell_2$-norm, $\| \mathbf{x} \|_2 =
\sqrt{\mathbf{x} \circ \mathbf{x}}$ and the $\ell_1$-norm $\| \mathbf{x}
\|_1 = \sum_i | x_i |$. 
In general the $p$-norm $\ell_p$ is defined as $\|
\mathbf{x} \|_p = \sqrt[p]{| x_i |^p}$; the dual norm of the
$\ell_p$-norm is the $\ell_q$-norm where $q$ is the \textit{H\"{o}lder
conjugate} of $p$: $1 / q + 1 / p = 1$.
The $\ell_2$-norm is
\textit{self-dual}, while the dual norm of the $\ell_1$-norm is the
$\ell_{\infty}$-norm $\| \mathbf{x} \|_{\infty} = \max_i | x_i |$.
\textbf{H\"{o}lder's Inequality} states that $\|\mathbf{x}\|\cdot \|\mathbf{y}\|_{\star} \geqslant \mathbf{x}\circ \mathbf{y}$ for any norm $\| \cdot \|$. 
The Cauchy-Schwarz inequality is the special case for the $\ell_2$-norm. 
A $p$-ball for $p \in \mathbb{R}^{+}$ with center $k$ and diameter $2D$ is the convex body $\{x: \| x-k\|_p \leq D\}$.

In the setup considered in this paper, the  decision maker (online algorithm) picks an action (a point) every round. This continues for $T$ rounds.\\
In the $\zerola$ setting, 
the decision maker  picks a point $x_{t-1}$ in the convex body $K$ in the $t^{th}$ round, after which it receives a loss function $c_t(x)$ and suffers a {\em penalty} $c_t(x_{t-1})$. The goal of the algorithm is to minimize its {\bf regret} against a {\em fixed policy adversary}: $\min_{u \in K} \{\sum_{t=1}^T [c_t(x_{t-1}) - c_t(u)]\}$.
By fixed policy, we mean that the adversary has to choose a fixed point $u$ against which we evaluate the algorithm's regret. A {\em drifting} adversary is one which is allowed to change its policy to a certain extent.
For the case of drifting adversary, the problem setup uses an extra parameter called the {\em drift} and denoted as $L$. Thus if $u_t$ denote the adversary's policy in the $t^{th}$ round, we require that $\sum_t \|u_t - u_{t-1}\| \leqslant L$ for some $L\geqs 0$. The algorithm has to make predictions so that the {\bf drifting regret} is minimized: $\min \{\sum_{t=1}^T [c_t(x_{t-1}) - c_t(u_{t-1})]\}$ where the minimum is over all {\em drifting} adversaries with drift $L$.

The {\bf Experts Setting} is a special case of $\zerola$ where there are $n$ experts/actions and the decision maker in each round has to choose one of the $n$ actions to play, possibly in a randomized fashion.
Thus this can be thought of as the $\oco$ problem over the probability simplex. 

In the $\onela$ setting, in the $t^{th}$ round, the decision maker perceives the loss function $c_t(x)$ and then picks a point $x_t$. The {\em service cost} paid by the decision maker is $S_1 = \sum_t c_t(x_t)$. In addition, the algorithm also incurs a {\em movement cost} $M = \sum_t \|x_t - x_{t-1}\|$ for a specific norm $\| \cdot \|$. Let $\opt$ denote the optimal offline policy (and its cost) that has access to all the loss functions. The {\em competitive ratio} guaranteed by the decision maker is defined as the minimal $c$ such that for any sequence of cost vectors it holds that $S_1 + M \leqs c\cdot{\opt} + d$ where $d$ is a constant independent of the number of rounds $T$.

Given a function $R$, the \textbf{Bregman Divergence} $B_R ( x, y)$
is defined as:
\begin{eqnarray*}
  B_R ( x, y) & = & R ( x) - R ( y) - \nabla R ( y) \circ ( x - y)
\end{eqnarray*}
Henceforth, we will only consider Bregman divergences for {\em convex} functions $R$:
for such a function, the Bregman divergence is always non-negative: $B_R ( x, y)
\geqslant 0$.

We will use the following properties of Bregman divergences. For a general reference, cf. \cite{bianchi06}.
The following identity is called the $3$-point equality:
\[
  [ \nabla R ( a) - \nabla R ( b)] \circ ( c - b)  =  B_R ( b, a) - B_R ( c, a) + B_R ( c, b)
\]
\eat{
Finally, taking $a=c$ in this:
\[
  [ \nabla R ( a) - \nabla R ( b)] \circ ( a - b)  =  B_R ( b, a) + B_R ( a, b)
\]
Thus we also have (since $B_R(x,y) \geqslant 0$ for any $x,y$) that
}
We also have $[ \nabla R ( a) - \nabla R ( b)] \circ ( a - b) =B_R ( b, a) + B_R ( a, b) \geqslant B_R ( a,b)$.

A function $f$ is called $\sigma$-\textbf{strongly convex} over a domain $D$ wrt a norm $\|
\cdot \|$ if for all $x, y\in D$ it holds that $B_f ( x, y) \geqslant 
\frac{\sigma\| x - y \|^2}{2} $. Thus, for the $\ell_2$-norm and the domain $D = \mathbb{R}^n$, the function $f(x) = \frac{\|x\|^2_2}{2}$ is $1$-strongly convex (over the entire domain).  For the $\ell_1$-norm, and the domain $D = \{x : x > 0 ~\&~ \|x\|_1 \leqslant B\}$,  the {\em negative entropy} function $f(x) = \sum_i x_i{\log x_i}$ is $\frac{1}{B}$-strongly convex (cf. the excellent survey~\cite{shalev-survey12}, page 32, Example 2.5).

%Given a strongly convex regularizer function $R$, the Bregman divergence $B_R(x,y)$ is $0$ iff $x=y$.
Given a convex body $K$,  and a strongly convex regularizer $R$, the \textbf{Bregman projection} of a point $y$ onto
$K$ wrt $R$ is defined as the point $x=\Pi_K(y)= \mathrm{argmin}_{u \in K} B_R ( u, y)$ .
When the convex body $K$ is clear from the context, we will drop the subscript for $\Pi$. 
%It also holds that this minimizer is unique, since $R$ is strongly convex. 

The {\bf Fenchel conjugate} of a function $f  (see \cite{shalev-survey12}, Table 2.1):
S \rightarrow \mathbb{R}$ is defined as $f^{\star} ( \theta) = \sup_w \{ w
\circ \theta - f ( w) \}$. If $f$ is closed and convex, then the Fenchel
conjugate of $f^{\star}$ is $f$ itself. For instance, 
%the function $f(x) = \frac{\|x\|^2}{2}$ has Fenchel dual $f^{\star}(\theta) = \frac{\|\theta\|_{\star}^2}{2}$.
the {\em norm} function $f(x) = \|x\|$, has the following Fenchel dual (see Example 3.26 of Boyd Vanderberghe \cite{BV04}):
\begin{eqnarray*}
f^{\star}(y)&=& 0 \mbox{,    if $\|y\|_{\star}$ $\leqslant$ $1$} \\
&=& \infty \mbox{,   otherwise.}
\end{eqnarray*}

In this section, we will provide a general form of the lemma that we will use in the following sections. Note that a version of this lemma is already present in various earlier works in the literature, for example~\cite{mahdavi12} (Prop. $7$). However, we could not find this form of the lemma anywhere and so, 
for completeness we include a proof in the Appendix (Section~\ref{app:proj}).

\begin{lemma}\label{lem:proj}
  (\textbf{Projection Lemma) }
Let $K\subseteq \mathbb{R}^n$ be a convex body and let $\mathbb{R}^n$ be equipped with a norm  $\| \cdot \|$.  Let $R :D \rightarrow \mathbb{R}$ be a regularizer function (over domain $D$ such that $K\subseteq D$) that is $\sigma$-strongly convex with
  respect to the norm $\| \cdot \|$.
Let $y_1$ and $y_2$ be two points
(possibly outside $K$),
 and let $x_1$ and $x_2$ be the Bregman
  projections of $y_1$ and $y_2$ respectively, on to the convex body $K$. Then
  the following statement holds:
  \[ 
\sigma{\cdot}\| x_1 - x_2 \| \leqslant \| \nabla R ( y_1) - \nabla R ( y_2) \|_{\star}
  \]
  where $\| \cdot \|_{\star}$ is the \textbf{dual} norm of $\| \cdot \|$.
\end{lemma}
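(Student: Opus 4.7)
The plan is to prove the inequality via the first-order optimality conditions characterizing Bregman projections, combined with the strong convexity of $R$ and H\"older's inequality. Since $x_i$ minimizes the convex function $u \mapsto B_R(u, y_i)$ over $K$, whose gradient in $u$ is $\nabla R(u) - \nabla R(y_i)$, the standard variational inequality for a constrained minimum gives, for $i \in \{1,2\}$ and every $u \in K$,
\[
\bigl(\nabla R(x_i) - \nabla R(y_i)\bigr) \circ (u - x_i) \geqs 0.
\]

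First, I would specialize these two inequalities by plugging in $u = x_2$ in the first and $u = x_1$ in the second, and then add them. After a small rearrangement, the $\nabla R(x_i)$ terms and the $\nabla R(y_i)$ terms separate cleanly and yield
\[
\bigl(\nabla R(y_2) - \nabla R(y_1)\bigr)\circ (x_2 - x_1) \;\geqs\; \bigl(\nabla R(x_2) - \nabla R(x_1)\bigr)\circ (x_2 - x_1).
\]

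Second, I would lower bound the right-hand side using the identity already noted in the preliminaries, namely $[\nabla R(a) - \nabla R(b)]\circ(a-b) = B_R(a,b) + B_R(b,a)$, applied with $a=x_2,\, b=x_1$. Since $R$ is $\sigma$-strongly convex wrt $\|\cdot\|$, each Bregman term is at least $\tfrac{\sigma}{2}\|x_1-x_2\|^2$, so the sum is at least $\sigma\|x_1 - x_2\|^2$. Third, I would upper bound the left-hand side using H\"older's inequality: it is at most $\|\nabla R(y_1) - \nabla R(y_2)\|_\star \cdot \|x_1 - x_2\|$. Chaining the two bounds and dividing by $\|x_1 - x_2\|$ (the case $x_1=x_2$ being trivial) gives exactly $\sigma\|x_1 - x_2\| \leqs \|\nabla R(y_1) - \nabla R(y_2)\|_\star$.

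No step looks like a genuine obstacle; the only mild subtlety is being careful with signs when adding the two optimality inequalities so that the $x_i$ and $y_i$ contributions correctly separate, and ensuring strong convexity is invoked to bound the \emph{symmetric} quantity $B_R(x_1,x_2)+B_R(x_2,x_1)$ rather than a single one-sided Bregman divergence (which would cost a factor of two). A brief comment might also be warranted to justify that the first-order condition takes the stated form even when $y_i \notin K$, which follows because $B_R(\cdot, y_i)$ is differentiable and convex in its first argument on $K \subseteq D$, so the standard variational characterization of a constrained minimum applies verbatim.
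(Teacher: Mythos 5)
Your proposal is correct and follows essentially the same route as the paper's own proof: the two variational (first-order optimality) inequalities for the Bregman projections, added together, then the symmetric Bregman identity plus $\sigma$-strong convexity to lower bound by $\sigma\|x_1-x_2\|^2$, and finally H\"older's inequality. No meaningful differences to report.
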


%This is a class of algorithms that consider a {\em link} function, also called the mirror function, that operates in the {\em dual space}. In the language of regularizers, given a strongly convex regularizer $R$, the gradient function $\nabla R^{\star}$ is a link function. It is expedient to state the general OMD update rule because we will have occasion to use it repeatedly in the following sections. As input we are given a convex body $K$ and a norm $\| \cdot \|$. We are also given a strongly convex regularizer $R$ for that specific norm. 
The {\bf Online Mirror Descent} (hereafter, shortened as $\omd$) update was first presented by Beck \& Teboulle~\cite{beck03}. Given a strongly convex regularizer $R$, the update is as follows:
In the $t^{th}$ iteration:
\begin{eqnarray*}
\nabla R(y_t) & =& \nabla R(x_{t-1}) - \eta c_t\\
x_t & = & \Pi_K(y_t)
\end{eqnarray*}

In this paper, we will assume that the cost functions are {\em linear}. This is without loss of generality; it is well known that for regret settings, one can reduce the arbitrary convex case to the case of linear functions.
We will let $D$ denote the {\em diameter} of the relevant convex body $K$, and $G$ denote the supremum of the {\em gradients} of loss functions, whenever applicable. 
\section{The $0$-lookahead setting}
\label{zero}

\subsection{Warmup: A new Fenchel dual for OCO}
\label{sec:oco}

In this section, we will describe a new Fenchel dual for the problem of Online Convex Optimization (OCO). This program is different from the Fenchel dual presented by Shalev-Shwartz \& Singer~\cite{shalev06}. The essential difference between this program and the one in \cite{shalev06} lies in the modeling of the offline optimum, $\opt$. The design we adopt leads to a couple of advantages for the convex programs we describe. In our framework, we are directly able to capture the problem of {\em constrained} $\oco$. Here, the dual objective is directly related to the {\em cost} of projection to the contrained body. To the best of our knowledge, the Fenchel duals presented in \cite{shalev06} do not have this feature. Additionally, given the technique as to how we model $\opt$ as a convex program, our framework and corresponding results are easily extendable to the situation of {\em drifting regret} (see Section~\ref{sec:drift}). 

%On the negative side, our current results work for convex bodies that are  {\em balls} under a specific norm. %We believe that our results can be extended to arbitrary convex bodies, and we are actively pursuing this %direction. 

The basic theme of the primal dual schema is that we model $\opt$ as a  (primal) convex program; and we devise an algorithm that is then compared to the {\em dual} objective function of the primal program. Note that the dual objective is a {\em lower bound} for the primal program, i.e. $\opt$. Here, the objective of the algorithm is $\sum_{t=1}^{T} c_t\circ x_{t-1}$ whereas the objective of the offline $\opt$ is $\min_{x\in K}\{\sum_{t=1}^{T} c_t\circ x\}$. In order to transform the computation of $\opt$ into a convex program, \cite{shalev06} uses the constraints $x_t = x_0$ (for all $t=1,\cdots, T$). In the following, we consider penalties of $\infty$ attached to a deviation of $x_t$ from $x_{t-1}$ in a specific norm $\|\cdot\|$. Thus, this is inspired by the work of Bansal et al~\cite{bansal10}, where they consider such a program for the $\ell_1$-norm, $\|\cdot\|_1$. Since \cite{bansal10} consider only the $\ell_1$-norm, they are able to derive an $\lp$ for $\opt$. However, for our purposes, we will have to involve a convex program to model $\opt$. 

It is noteworthy, that for the $\ell_1$-norm case, since the convex body is a probability simplex, the feasible points are such that {\em every} component is {\em non-negative}.  Thus the Fenchel duals to be considered are {\em constrained} by $x_{i,t} \geqslant 0$. This leads to a different convex program (and, since it is the $\ell_1$-norm, a {\em linear} program) than for the case of $\ell_p$ (or $2 \geqs p > 1$) norms.

\noindent {\bf The $p$-norm case:}
The convex program modelling $\opt$ is given in $\cp_1$.
\eat{
\begin{eqnarray*}
  \min &  & \sum^T_{t = 1} c_t \circ x_{t - 1} + \infty \cdot \sum_t \|z_t\|_p \\
  s.t. & \forall t \geqslant 0 : & \| x_t \|_p \leqslant D\\
  & \forall i, t \geqslant 1 : & z_{i, t} = x_{i, t} - x_{i, t - 1}
\end{eqnarray*}
}

We may construct the Lagrange dual of $\cp_1$ by lifting the constraints to the objective function; we will keep the Lagrange multiplier $a_t$ for the constraint $( \| x_t \|_p - D)\leqslant 0$ and the variable $b_t$ for the 
constraint $z_t = x_{t} - x_{t - 1}$. Thus, while $a_t$ is a scalar, $b_t$ is a vector. The Lagrange dual expression ${\cal L}$ (as a function of the dual variables $a_t, b_t$) then is 
\begin{eqnarray*}
  {\cal L} (a, b)= &\min  & \sum^T_{t = 1} [c_t \circ x_{t - 1} + \infty \cdot \sum_t \|z_t\|_p ]+\sum^T_{t = 0} a_t\cdot (\| x_t \|_p - D) + \sum^T_{t = 1} b_t\circ (x_{i, t} - x_{i, t - 1} - z_{i, t})
\end{eqnarray*}
which separates out in terms of the variables $x_t$ and $z_t$ reads:
\begin{eqnarray*}
  {\cal L} (a, b)= &\min  & \sum^T_{t = 1} [x_t \circ (c_{t+1} + b_t - b_{t+1}) + a_t \cdot \|x_t\|_p] + \sum^T_{t=1} [\infty \cdot  \|z_t\|_p - b_t\circ z_t] - D\sum^T_{t = 0} a_t 
\end{eqnarray*}
Given the Fenchel duals of the norm function, cf. Section~\ref{sec:prelim}, let us write the dual of the program $\cp_1$ as $\cp_2$. Here $q$ is such that $\ell_q$ is the dual norm of $\ell_p$ (thus, $q = p/(p-1)$).

\[
\begin{array}{cll}
\cp_1: &\\[0.25em]
\min &{\displaystyle  \sum^T_{t = 1} c_t \circ x_{t - 1} }&{\displaystyle + \infty \cdot \sum_t \|z_t\|_p}\\[1.0em]
{\rm s.t.} &{\displaystyle 
\| x_t \|_p \leqslant D}&{\forall} {T-1} \geqs t \geqslant 0 \\[0.25em]
&{\displaystyle z_{t} = x_{t} - x_{t - 1}}&{\forall} i,t \geqslant 1
\end{array}
%\]
\quad
%\[
\begin{array}{||cll}
\cp_2: &\\[0.25em]
\max &{\displaystyle - D\sum^{T-1}_{t = 0} a_t }\\[1.0em]
{\rm s.t.} &{\displaystyle 
 a_t  \geqslant \|b_{t + 1} - b_{t} - c_{t + 1}\|_q}& {\forall} t\geqslant 1 \\[0.25em]
&{\displaystyle a_0 
  \geqslant \|b_1 - c_1\|_q}&t=0\\[0.25em]
&{\displaystyle \|b_t\|_q \leqslant \infty} & {\forall} t\geqslant 1\ 
\end{array}
\]

The goal of this section is to prove the following theorem:
\begin{theorem}
\label{thm:oco}
Consider the convex programs $\cp_1$ and $\cp_2$. 
Let $c_1, c_2, \cdots, c_T$ be a sequence of {\em linear} cost functions, and let $R$ be a strongly convex regularizer for the $\ell_p$-norm. Let the norm dual to the $\ell_p$-norm be the $\ell_q$-norm where $q = p/(p-1)$. Then there exists a dual incrementing algorithm that generates a sequence of primal vectors $x_0, x_1, \cdots, x_{T-1}$ along with a sequence of feasible dual variables $a_t$ ($t = 0, \cdots, T-1$) and $b_t$ ($t = 1, \cdots, T$) such that the following holds: \\
\begin{eqnarray*}
\sum_{t=0}^{T-1} c_t \circ x_{t-1} &\leqslant &\frac{[R(x_{T}) - R(x_0)]}{\eta} + \frac{[x_0\circ \nabla R(x_0) - x_{T}\circ \nabla R(x_T)]}{\eta} +\eta\|c_t\|^2_q - D\sum_t a_{t-1}
\end{eqnarray*}
\end{theorem}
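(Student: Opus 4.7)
I would run Online Mirror Descent with regularizer $R$ and step size $\eta$ to produce the primal iterates: $\nabla R(y_t) = \nabla R(x_{t-1}) - \eta c_t$ and $x_t = \Pi_K(y_t)$. In parallel, at each time $t = 1, \ldots, T$ I would set the dual variables
\[
b_t := \frac{\nabla R(x_0) - \nabla R(x_t)}{\eta}, \qquad a_{t-1} := \frac{\|\nabla R(y_t) - \nabla R(x_t)\|_q}{\eta}.
\]
The OMD identity $\eta c_{t+1} = \nabla R(x_t) - \nabla R(y_{t+1})$ then yields $b_{t+1} - b_t - c_{t+1} = (\nabla R(y_{t+1}) - \nabla R(x_{t+1}))/\eta$ and $b_1 - c_1 = (\nabla R(y_1) - \nabla R(x_1))/\eta$, so both families of $\cp_2$ constraints are satisfied with equality, certifying dual feasibility.

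For the cost bound, the plan is to expand $\eta \sum_t c_t \circ x_{t-1} = \sum_t [\nabla R(x_{t-1}) - \nabla R(y_t)] \circ x_{t-1}$, add and subtract $\nabla R(x_t) \circ x_t$ in each term, and telescope:
\[
\eta \sum_{t=1}^T c_t \circ x_{t-1} = \nabla R(x_0) \circ x_0 - \nabla R(x_T) \circ x_T + \sum_{t=1}^T \bigl[\nabla R(x_t) \circ x_t - \nabla R(y_t) \circ x_{t-1}\bigr].
\]
I would decompose the summand as $[\nabla R(x_t) - \nabla R(y_t)] \circ x_t + \nabla R(y_t) \circ (x_t - x_{t-1})$ and handle the two pieces independently.

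For the first piece, the first-order optimality condition for $x_t = \arg\min_{u \in K} B_R(u, y_t)$ is $[\nabla R(x_t) - \nabla R(y_t)] \circ (u - x_t) \geq 0$ for every $u \in K$. Minimizing the right-hand side over the $p$-ball $K$ of radius $D$ and using H\"older gives $[\nabla R(x_t) - \nabla R(y_t)] \circ x_t \leq -D \|\nabla R(x_t) - \nabla R(y_t)\|_q = -\eta D\, a_{t-1}$, which is exactly where the $-D \sum a_{t-1}$ term of the theorem enters. For the second piece, substituting $\nabla R(y_t) = \nabla R(x_{t-1}) - \eta c_t$ and using the elementary identity $\nabla R(x_{t-1}) \circ (x_t - x_{t-1}) = R(x_t) - R(x_{t-1}) - B_R(x_t, x_{t-1})$ rewrites the sum as $R(x_T) - R(x_0) - \sum_t B_R(x_t, x_{t-1}) - \eta \sum_t c_t \circ (x_t - x_{t-1})$.

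Collecting these two pieces produces a bound whose left-hand side is $\eta \sum_t c_t \circ x_t$ (from the last $-\eta \sum c_t \circ (x_t - x_{t-1})$ term being absorbed) and whose right-hand side carries an extra $-\sum_t B_R(x_t, x_{t-1})$ that I still need to eliminate. The final step is to trade $\sum c_t \circ x_t$ for $\sum c_t \circ x_{t-1}$ through H\"older, $|c_t \circ (x_t - x_{t-1})| \leq \|c_t\|_q \|x_t - x_{t-1}\|_p$, and to absorb the negative Bregman sum via $\sigma$-strong convexity combined with an AM--GM splitting $\|c_t\|_q \|x_t - x_{t-1}\|_p \leq \frac{\eta \|c_t\|_q^2}{2\sigma} + \frac{B_R(x_t, x_{t-1})}{\eta}$. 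The leftover $\frac{\eta}{2\sigma}\sum_t \|c_t\|_q^2$ is then exactly the $\eta \|c_t\|_q^2$ term in the statement. The main obstacle I expect is the bookkeeping: choosing $b_t$ so that the discrete derivative $b_{t+1} - b_t - c_{t+1}$ collapses to the same vector whose $\ell_q$-norm equals $\eta a_t$, and lining up the signs of the Bregman divergences produced by the telescope so that they cancel cleanly with the AM--GM slack rather than leaving an uncontrolled residue.
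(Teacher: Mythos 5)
Your proof is correct, and it follows the paper's overall architecture --- OMD primal iterates, the dual assignment $\eta b_t = \mathrm{const} - \nabla R(x_t)$ (your constant shift by $\nabla R(x_0)$ only makes the $t=0$ constraint come out cleaner), dual feasibility holding with equality, and a telescoping bound on the service cost --- but the execution of the two key steps is genuinely different. First, the paper routes the telescoping through a comparator sequence $u_0,\dots,u_{T-1}$ and the three-point equality (Lemma~\ref{lem:oco_base}), then specializes to a constant comparator, at which point all $u$-dependence cancels; you telescope $\nabla R(x_t)\circ x_t$ directly and never introduce a comparator, which is shorter for this theorem (though the paper's comparator form is reused for drifting regret, so its extra generality is not wasted there). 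Second, and more substantively, the paper extracts the $-D\sum_t a_{t-1}$ term from an explicit ``complementary slackness'' identity asserting that $\nabla R(y_t)-\nabla R(x_t)$ is a nonnegative radial multiple of $x_t$ with $\|x_t\|=D$ whenever $a_{t-1}>0$ --- an identity it verifies only for $p=2$ via the closed form of the projection and explicitly flags as something that must be proved. You instead invoke the variational optimality condition $[\nabla R(x_t)-\nabla R(y_t)]\circ(u-x_t)\geq 0$ for all $u\in K$ and minimize the right-hand side over the $p$-ball to get $[\nabla R(x_t)-\nabla R(y_t)]\circ x_t \leq -D\|\nabla R(x_t)-\nabla R(y_t)\|_q = -\eta D\, a_{t-1}$ directly; this is cleaner, works uniformly in $p$ without the explicit projection formula, and sidesteps the very gap the paper acknowledges. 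Finally, your H\"older-plus-AM--GM absorption of $-\sum_t B_R(x_t,x_{t-1})$ plays the same role as the paper's use of the Projection Lemma in bounding $B+C$, and produces the same order of slack, $O(\eta\sum_t\|c_t\|_q^2)$ (your constant $1/(2\sigma)$ versus the paper's; the theorem statement is loose about this constant and about the missing summation in any case).
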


\begin{proof}(Sketch)
We will provide the calculations for the case $p = 2$. For an arbitrary $p$, one has to take the corresponding regularizer into account and the calculations are similar.

We will assume that the updates are made according to the OMD paradigm (see Section~\ref{sec:prelim}). 
Given this, set the dual variables as follows:
\begin{eqnarray*}
{\eta}a_0 & = & \| \nabla R(y_1) - \nabla R(x_1) - \nabla R(x_0) \|_2 \\
t \geqslant 1: {\eta}a_t &=& \|\nabla R(y_{t+1}) - \nabla R(x_{t+1})\|_2  \\
t \geqslant 1:{\eta}b_t & = & -\nabla R(x_t)
\end{eqnarray*}
Thus, in the $t^{th}$ iteration, we receive the cost function $c_t$ and also update the dual variables $a_{t-1}$ and $b_t$. 
Let us check the feasibility of the dual constraints. In fact, these dual updates will maintain the constraints $a_t  \geqslant \|b_{t + 1} - b_{t} - c_{t + 1}\|_2$ with {\em equality}:
\begin{eqnarray*}
\|b_{t + 1} - b_{t} - c_{t + 1}\|_2 & = &( \| -\nabla R(x_{t+1}) + \nabla R(x_t) + \nabla R(y_{t+1}) - \nabla R(x_t) \|_2)/\eta\\
& = & (\| \nabla R(y_{t+1}) -\nabla R(x_{t+1}) \|_2)/\eta \\
& = & a_t
\end{eqnarray*}

Interestingly only the above description of $a_t$ does not seem to suffice to prove an upper bound on the primal cost in terms of the dual objective function. This necessitates doing something essentially different from the analysis given in \cite{buchbinder12} (for the case of the $\ell_1$-norm). Given our updates, in addition to the tightness of the constraint above, we will also {\em implicitly} maintain the following identity: 
\begin{eqnarray*}
a_t\cdot \frac{x_{i,t+1}}{\sqrt{\sum_i x_{i,t+1}^2}} & = & \frac{[\nabla R(y_{t+1}) -\nabla R(x_{t+1})]_i}{\eta}
\end{eqnarray*}
%It is clear that these implicitly tight constraints imply the earlier constraint. 
Since we project the point $y_t$ to the point $x_t$ lying on the boundary of the convex body, we have the following {\em complementary slackness}~\footnote{Note that this is not exactly {\em complementary slackness} since $a_t$ corresponds to the constraint $\|x_t\|_2 \leqs D$.}
condition: $a_t > 0 \Rightarrow \|x_{t+1}\|_2 = D$.

However we need to prove that it is indeed possible to have all the $x_{i,t}$'s simultaneously satisfy the above equality, as well as this complementary slackness condition. 
In order to bound the service cost, we will appeal to the following Lemma~\ref{lem:oco_base} with $u_0 = u_1=\cdots = u_{T-1}$. 

\begin{lemma}
\label{lem:oco_base}
Let $R$ be a strongly convex regularizer w.r.t a norm $\|\cdot \|$ over a convex body $K$. Let $c_1,c_2,\ldots,c_T$ be the sequence of linear cost functions that adversary presents. Also let $x_0,x_1,\ldots,x_T$ be the sequence of primal vectors our algorithm predicts according to the update rule {reference} and $u_0,u_1,\ldots,u_{T-1}$ be any $T$ vectors in $K$. Denote $A=B_{R}{(u_0,x_0)}-B_{R}{(u_{T-1},x_T)}+\nabla R(x_0)\circ u_0 - \nabla R(x_T)\circ u_{T-1}$ , $B=\sum_{t=1}^{T}{B_R(x_{t-1},x_t)}$ and $C=\sum^T_{t = 1} [ \nabla R ( x_t) - \nabla R ( y_t)] \circ x_{t - 1}$. Then
\begin{eqnarray}
  \eta S_0 = \eta \nosymbol \sum_{t = 1}^T c_t \circ x_{t - 1} & = & \sum^{T-1}_{t=1}(R( u_{t}) - R ( u_{t-1})) + A + B + C \\
B+C &\leqs& \eta^2\|c_t\|_{\star}^2 - {\eta}D\sum_{t=1}^{T}a_{t-1}
\end{eqnarray}
\end{lemma}

%If $u_0 = u_1 = \cdots = u_{T-1}$, then $A = [R(x_T) - R(x_0) + \nabla R(x_0)\circ x_0 - \nabla R(x_T)\circ %x_T]$. 

This completes the proof sketch of the main theorem of this section. The proof of the complementary slackness condition and that of Lemma~\ref{lem:oco_base} are presented in the Appendix, 
\end{proof}
\QED

\subsection{Drifting Regret for Expert Settings}
\label{sec:drifting_expert}
We now proceed to consider the problem of expert systems. Here, the underlying convex body $K$ will be the probability simplex.
We will study regret against {\em drifting adversaries} with drift parameter $2L$:
the regret is measured against the optimal sequence of vectors $u^{\star}_0, u^{\star}_1,\ldots, u^{\star}_T$ such that $\frac{1}{2}\sum_{t = 1}^T \| u^{\star}_t - u^{\star}_{t - 1} \|_1 \leqslant L$. The factor of $2$ is included so as to simplify the $\lp$ formulation of the problem: this standard transformation (see~\cite{buchbinder12}) corresponds to tracking just the {\em increments} in the components of $(u^{\star}_t - u^{\star}_{t - 1})$.  The primal program modelling $\opt$ is presented as $\lp_3$, and its dual is presented as $\lp_4$.

\[
\begin{array}{cll}
\lp_3: &\\[0.25em]
\min &{\displaystyle  \sum^T_{t = 1} c_t \circ x_{t - 1}}\\[1.0em]
{\rm s.t.} &{\displaystyle 
\sum_i x_{i,t} = 1}&{\forall} t \geqslant 0 \\[0.25em]
&{\displaystyle z_{i,t} \geqs x_{i,t} - x_{i,t - 1}}&{\forall} i,t \geqslant 1\\[0.25em]
&{\displaystyle \sum_{t=1}^{T-1} \sum_i z_{i,t} \leqslant L}&\\
&{\displaystyle z_{i,t}, x_{i,t} \geqslant 0} &{\forall} i,t \geqslant 0 \
\end{array}
%\]
\quad
%\[
\begin{array}{||cll}
\lp_4: &\\[0.25em]
\max &{\displaystyle - \sum^{T-1}_{t = 0} a_t - \alpha{\cdot}L   }\\[1.0em]
{\rm s.t.} &{\displaystyle 
 a_t  \geqslant b_{i,t + 1} - b_{i,t} - c_{i,t + 1}}& {\forall} i, t\geqslant 1 \\[0.25em]
&{\displaystyle a_0 
  \geqslant b_{i,1} - c_{i,1}}&t=0,{\forall} i\\[0.25em]
&{\displaystyle 0 \leqs b_{i,t} \leqslant \alpha} & {\forall}i, t\geqslant 1\\[0.25em]
&{\displaystyle \alpha \geqslant 0}&{\forall} t \geqslant 0\\[0.25em]
$$\
\end{array}
\]
This $\lp$ formulation is similar to the one used by \cite{bansal10,buchbinder12}. Note that $\alpha$ is a dual variable corresponding to the drifting constraint.
\begin{theorem}
\label{thm:drift1norm}
Consider the convex programs $\lp_3$ and $\lp_4$. 
Let $c_1, c_2, \ldots, c_T$ be a sequence of {\em linear} cost functions and $R$ is a strongly convex regularizer w.r.t $1$-norm in the probability simplex. Then there exists a dual incrementing algorithm that generates a sequence of primal vectors $x_0, x_1, \ldots, x_{T-1}$ along with a sequence of feasible dual variables $a_t$ ($t = 0, \ldots, T-1$), $b_{i,t}$ ($i = 1, \ldots, n$,$t = 1, \ldots, T$) and $\alpha$ such that the following holds: \\
\begin{eqnarray*}
\sum_{t=0}^{T-1} c_t \circ x_{t-1} &\leqslant & \frac{1}{\eta}\left( 3 ( L + 2) {\ln n}  \right) + \eta \|c_t\|^2_{\infty} + \left( - \sum_t a_{t-1} - \alpha L \right)
\end{eqnarray*}
\end{theorem}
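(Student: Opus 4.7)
(Proposal)
The plan is to follow the same template as Theorem~\ref{thm:oco}, but with two crucial modifications dictated by the expert setting: (i) use a \emph{shifted} negative entropy regularizer so that its gradient is bounded in $\ell_\infty$-norm on the probability simplex, and (ii) choose a sequence $u_0, u_1, \ldots, u_{T-1}$ rather than a single point when invoking the analog of Lemma~\ref{lem:oco_base}, so that the regret is measured against a drifting comparator. Concretely I would take $R(x) = \sum_i (x_i + \delta) \ln(x_i + \delta)$ for a small shift $\delta = 1/n$, which is still $\Theta(1)$-strongly convex on the simplex w.r.t. the $\ell_1$-norm, but now satisfies $\|\nabla R(x)\|_\infty = O(\ln n)$ uniformly. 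The algorithm is OMD with this regularizer.

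The dual is then defined, mirroring the proof of Theorem~\ref{thm:oco}, by setting $\eta b_{i,t} = -[\nabla R(x_t)]_i + \gamma_t$ for an additive shift $\gamma_t$ chosen to force $b_{i,t} \geq 0$, and $\eta a_t = \|\nabla R(y_{t+1}) - \nabla R(x_{t+1})\|_\infty$ (projection residual). The dual variable $\alpha$ is taken to be an upper bound on all $b_{i,t}$'s; because of the shift in the regularizer, $\alpha = O(\ln n)/\eta$ suffices. Feasibility of the first two sets of constraints of $\lp_4$ follows in exactly the same way as in Theorem~\ref{thm:oco} (the constraints hold with equality under these OMD updates), and $0 \leq b_{i,t} \leq \alpha$ is enforced by construction.

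Next I would prove the generalization of Lemma~\ref{lem:oco_base} in which the comparator is an arbitrary sequence $u_0, \ldots, u_{T-1}$. The same three-point identity yields
\[
  \eta \sum_{t=1}^{T} c_t \circ x_{t-1} \;=\; \sum_{t=1}^{T-1}\bigl(R(u_t) - R(u_{t-1})\bigr) + A + B + C,
\]
with $A$, $B$, $C$ defined as in Lemma~\ref{lem:oco_base}. As in the OCO proof, $B+C$ can be bounded by $\eta^2 \|c_t\|_\infty^2 - \eta \sum_t a_{t-1}$ using strong convexity and the Projection Lemma (Lemma~\ref{lem:proj}). The boundary term $A$ contributes an additive $O(\ln n)/\eta$ since both $R$ and $\nabla R \circ x$ are bounded on the simplex with the shifted entropy. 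The new ingredient is the \emph{telescoping drift term} $\sum_{t=1}^{T-1}(R(u_t) - R(u_{t-1}))$: by convexity we have $R(u_t) - R(u_{t-1}) \leq \nabla R(u_t) \circ (u_t - u_{t-1})$, and then Hölder's inequality plus $\|\nabla R\|_\infty = O(\ln n)$ yields $\sum_t (R(u_t) - R(u_{t-1})) \leq O(\ln n) \cdot \sum_t \|u_t - u_{t-1}\|_1 \leq O(L \ln n)$, using the drift constraint of $\lp_3$. Combining these bounds gives the claimed $\frac{1}{\eta}\bigl(3(L+2)\ln n\bigr) + \eta \|c_t\|_\infty^2 + (-\sum_t a_{t-1} - \alpha L)$.

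The main obstacle will be carefully verifying the feasibility of $b_{i,t}$ against the upper bound $\alpha$; because of the shifted regularizer and the projection onto the simplex, the $\ell_\infty$ deviations of $\nabla R(x_t)$ across $t$ and $i$ must be controlled uniformly and shown to be at most $O(\ln n)/\eta$. A secondary subtlety, already mentioned in the OCO warm-up, is the analog of complementary slackness needed so that $\alpha$ appears with its coefficient $L$ in the bound rather than with a larger quantity; this requires relating $\alpha \cdot L$ to the total drift consumed by the $b_{i,t}$ updates, which is where the explicit factor $L$ in the dual objective becomes useful.
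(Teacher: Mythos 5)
Your proposal follows the paper's route in all essentials: the same shifted negative-entropy regularizer (your $\delta=1/n$ versus the paper's $\theta=1/(e^{\eta\alpha}-1)=1/(n-1)$ with $\eta\alpha=\ln n$ is the same idea), the same constant shift of $b_{i,t}$ by $\nabla R(\tmmathbf{1})$ so that $0\leqs b_{i,t}\leqs\alpha$, the same generalization of Lemma~\ref{lem:oco_base} to a drifting comparator $u_0,\ldots,u_{T-1}$, and the same treatment of the telescoping drift term via convexity, H\"{o}lder, and the $O(\ln n)$ bound on $\|\nabla R\|_\infty$ (this is exactly the paper's Theorem~\ref{thm:drift2norm} specialized to $p=1$, $q=\infty$). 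The bound on the boundary term $A$ by $O(\ln n)$ also matches.

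There is, however, one concrete step that does not go through as written: your choice $\eta a_t=\|\nabla R(y_{t+1})-\nabla R(x_{t+1})\|_\infty$. Dual feasibility is fine with this choice, but the conversion of the term $C=\sum_t[\nabla R(x_t)-\nabla R(y_t)]\circ x_{t-1}$ into $-\eta\sum_t a_{t-1}$ in the bound on $B+C$ requires the inequality $[\nabla R(y_t)-\nabla R(x_t)]\circ x_t\geqs\eta a_{t-1}$. On the simplex the left side equals the common value $\lambda$ that the residual takes on the \emph{support} of $x_t$, whereas the $\ell_\infty$ norm also sees coordinates off the support, where the residual can be very negative and large in magnitude; so $\|\cdot\|_\infty$ can strictly exceed $\lambda$ and the inequality reverses. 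This is precisely why the paper does not use a norm of the projection residual here but instead sets $a_t$ equal to the Lagrange multiplier $\lambda$ of the constraint $\sum_i x_i=1$ in the Bregman projection (Claim~\ref{claim:breg}): by the KKT conditions, $[\nabla R(y_{t+1})-\nabla R(x_{t+1})]_i=\lambda$ for every $i$ with $x_{i,t+1}>0$ and $\leqs\lambda$ otherwise, which simultaneously gives dual feasibility of $a_t\geqs b_{i,t+1}-b_{i,t}-c_{i,t+1}$ and the exact identity $[\nabla R(y_{t+1})-\nabla R(x_{t+1})]\circ x_{t+1}=\lambda=\eta a_t$. The paper flags this explicitly in the warm-up (``only the above description of $a_t$ does not seem to suffice''). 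The fix is small --- replace $\|\cdot\|_\infty$ by $\max_i[\cdot]_i$, which is the KKT multiplier --- but it is exactly the missing ingredient, and your proposal attributes the bound on $B+C$ to ``strong convexity and the Projection Lemma'' alone, which is where the gap sits.
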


\begin{proof}(Sketch)
We will use the updates in the style of $\omd$ (as also in Section~\ref{sec:oco}). The vanilla regularizer used for the $\ell_1$-norm is the {\em negative entropy} function $\sum_i x_i {\ln x_i}$. However, the definition of the regularizer $R$ will be changed in the following, which is essential for maintaining the dual feasibility. We set the dual variables $b_{i,t}$ and $\alpha$ as follows:
\begin{eqnarray}
\forall t \geqs 1 :& \eta b_{i,t} & =  [\nabla R ( \tmmathbf{1}) -  \nabla R ( x_t)]_{i} \nonumber\\
  & \eta\alpha & = {\ln n}\nonumber
\end{eqnarray}
The constraint $b_{i,t} \leqs \alpha$ may be violated if we were to define $b_{i,t}$ according to the vanilla regularizer, since some $x_{i,t}$ may be $0$ and then corresponding $b_{i,t}$ may become unbounded. The cardinal step is to thereby {\em shift} the regularizer by an amount $\theta$ so that this does not happen. Thus we will consider the regularizer $R(x_t) = \sum_i (x_{i,t} + \theta){\log (x_{i,t} + \theta)}$. In order for $b_{i,t}$ to be bounded by $\alpha$, calculations yield that $\theta = 1/(e^{\eta \alpha} - 1)$. 

In order to set $a_t$, we will need a fact about Bregman projections; this is encapsulated as Claim~\ref{claim:breg} proven in Appendix (Section~\ref{app:bregman}). This claim may be of independent interest.
\begin{claim}
\label{claim:breg}
Consider the Bregman projection  $x^{*}$ of the point $y^{*}$, wrt a regularizer $R$, onto the body $K$. 
There exists a constant $\lambda$ independent of $i$ such that the following hold:
\begin{eqnarray*}
\forall{i}: x^{*}_i > 0 \Rightarrow [\nabla R(y^{*}) - \nabla R(x^{*})]_i &=& \lambda \\
\forall{i}:  [\nabla R(y^{*}) - \nabla R(x^{*})]_i &\leqs& \lambda
\end{eqnarray*}
\end{claim}
We set $a_t$ to be the $\lambda$  guaranteed by Claim~\ref{claim:breg}, when applied on $y^{*} = y_{t+1}$. It is easy to show that the dual thus constructed is feasible.

To complete the proof of the Theorem~\ref{thm:drift1norm} we invoke Theorem~\ref{thm:drift2norm}(Appendix~\ref{subsec:drift_p_norm}). The final calculations are completed in the Appendix~\ref{subsec:drift_expert}.
\QED
\end{proof}
\section{The $1$-lookahead setting}
\label{one.lookahead}
In this section, we study the $\onela$ problem concerning a convex body $K$. Recall that in iteration $t$, the  algorithm is allowed to choose the point $x_t \in K$ {\em after} it receives the cost function $c_t$. However, it has to pay the cost of {\em moving} between the points $x_{t-1}$ and $x_t$. Thus, the overall cost includes the {\em service cost}  $S_1 = \sum_t c_t\circ x_t$ and the {\em movement cost} $M = \sum_t \|x_t - x_{t-1}\|$, for some specific {\em movement norm} $\|\cdot \|$. 
Given this scenario, we are concerned with designing algorithms with reasonable {\em competitive ratios} for this online problem ($\onela$). Note that the work of \cite{buchbinder12} considers the $\onela$ problem for the convex body being the {\em probability simplex} and the {\em movement norm} being the $\ell_1$-norm. In the following we consider the $\onela$ problem with the movement norm being the $\ell_2$-norm, and the convex body being a $2$-ball. 
\subsection{The $\ell_2$-norm}
\label{sec:2-norm1lookahead}
Let $2D$ be the diameter of the $2$-ball where the chosen points are constrained to lie. 
The main result of this section is an $O(D)$-competitive algorithm for the $\onela$ problem in the $\ell_2$-norm.

As discussed in Section~\ref{sec:intro}, for competitive analysis to be meaningful, the chosen points have to avoid {\em negative coordinates}. Motivated by this, let $k$ denote the {\em center} of the $2$-ball and let $k_{\min}$ denote the {\em minimum} coordinate of $k$. Then we must have that $k_{\min} \geqs D$. For similar reasons, every component of $c_t$ has to be {\em non-negative}. Let $\opt$ denote the optimum offline algorithm for a specific sequence of cost functions $c_1, \ldots, c_T$. 
We prove the following:
\begin{theorem}
\label{thm:1lookahead2norm}
Consider the $\onela$ problem for the $2$-ball of diameter $2D$, centered at $k$ with $k_{\min} \geqs (D+\epsilon)$ (for some $\epsilon > 0$), and with movement cost in the $\ell_2$-norm. Then there exists a dual incrementing algorithm (taking as extra input a parameter $\eta$) that achieves the following: the service cost $S_1$ satisfies: $S_1 \leqs \opt + \frac{D}{\eta}$ and the movement cost $M$ satisfies: $M \leqs \frac{\eta}{\eps}\opt$.
\end{theorem}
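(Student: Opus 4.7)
The plan is to extend the Fenchel-dual primal-dual framework of Section~\ref{sec:oco} to the $\onela$ setting. I would first write $\opt$ as a convex program $\cp_5$: $\min \sum_t c_t \circ x_t + \sum_t \|z_t\|_2$ subject to $\|x_t - k\|_2 \leqs D$ for all $t$ and $z_t = x_t - x_{t - 1}$. Two changes from $\cp_1$ are essential: the service term $c_t$ couples with the \emph{current} iterate $x_t$ (reflecting $1$-lookahead), and the movement penalty is a bona-fide norm rather than the $\infty$-penalty used for $0$-lookahead regret. Dualising as in Section~\ref{sec:oco} (introducing multipliers $a_t$ for the ball constraint and $b_t$ for the equality $z_t = x_t - x_{t - 1}$) and using the Fenchel dual of the $\ell_2$-norm produces a program $\cp_6$: $\max -D\sum_t a_t + k \circ \sum_t c_t$ subject to $a_t \geqs \|c_t + b_{t+1} - b_t\|_2$ and $\|b_t\|_2 \leqs 1$. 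Note that $\|b_t\|_2 \leqs 1$ is the genuine Fenchel dual of $\|z_t\|_2$, in contrast to the vacuous $\|b_t\|_q \leqs \infty$ appearing in $\cp_2$.

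The primal driver will be the $\omd$ update with the $1$-strongly convex regularizer $R(x) = \|x - k\|_2^2/2$: after observing $c_t$, compute $y_t = x_{t-1} - \eta c_t$ and set $x_t = \Pi_K(y_t)$. Dual variables are built incrementally from this update by setting $\eta b_{t+1} = x_t - k$ and $\eta a_t = \|y_t - x_t\|_2$, so that $a_t$ is the rescaled magnitude of the projection correction. A direct substitution verifies the constraint $a_t \geqs \|c_t + b_{t+1} - b_t\|_2$ with equality, since $y_t - x_t = (x_{t-1} - \eta c_t) - x_t = -\eta(c_t + b_{t+1} - b_t)$. Feasibility $\|b_t\|_2 \leqs 1$ follows from $\|x_t - k\|_2 \leqs D$ in the parameter range where $\eta \geqs D$; in the opposite regime a proportional rescaling of $b_t$ restores feasibility without affecting the primal-dual bookkeeping.

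For the service-cost bound $S_1 \leqs \opt + D/\eta$, I would adapt the Lemma~\ref{lem:oco_base} proof to the $\onela$ case. First-order optimality of $x_t = \mathrm{argmin}_{x \in K} [\eta\, c_t \circ x + B_R(x, x_{t-1})]$ gives, for any comparator $u_t \in K$,
\[
\eta\, c_t \circ (x_t - u_t) \leqs B_R(u_t, x_{t-1}) - B_R(u_t, x_t) - B_R(x_t, x_{t-1}).
\]
Summing over $t$ with $u_t$ taken along $\opt$'s trajectory, the sum $\sum_t [B_R(u_t, x_{t-1}) - B_R(u_t, x_t)]$ telescopes up to the cross differences $B_R(u_t, x_{t-1}) - B_R(u_{t-1}, x_{t-1})$, which a Cauchy--Schwarz expansion bounds by a diameter-sized multiple of $\|u_t - u_{t-1}\|_2$; these are then absorbed into the movement cost of $\opt$. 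The leftover is of order $B_R(u_1, x_0)/\eta = O(D/\eta)$. Combining with weak duality (our feasible dual assignment lower-bounds $\opt$) yields $S_1 \leqs \opt + D/\eta$. The crucial simplification over Theorem~\ref{thm:oco} is that the $1$-lookahead step is taken \emph{after} observing $c_t$, which removes the $\eta\|c_t\|_q^2$ residual present in the $0$-lookahead analysis.

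The movement-cost bound is shorter: Lemma~\ref{lem:proj} applied with $y_1 = y_t$ and $y_2 = x_{t - 1}$ (so the projections are $x_t$ and $x_{t - 1}$ respectively) gives $\|x_t - x_{t - 1}\|_2 \leqs \|\nabla R(y_t) - \nabla R(x_{t - 1})\|_2 = \eta\|c_t\|_2$, whence $M \leqs \eta \sum_t \|c_t\|_2$. The hypothesis $k_{\min} \geqs D + \eps$ forces every feasible $x \in K$ to have $x_i \geqs \eps$ for all $i$, so $\opt \geqs \sum_t c_t \circ x_t^{\star} \geqs \eps \sum_t \|c_t\|_1 \geqs \eps \sum_t \|c_t\|_2$ (using $c_t \geqs 0$ componentwise, which is forced by the $\onela$ setting as noted in Section~\ref{sec:intro}). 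Combining gives $M \leqs (\eta/\eps)\,\opt$. The main obstacle I expect is on the service-cost side: the dual construction must simultaneously maintain $\|b_t\|_2 \leqs 1$ and tightness of the $a_t$-constraint through the projection step, and the Bregman cross terms along $\opt$'s trajectory must be controlled so that the diameter factor is absorbed into $\opt$ itself without inflating the additive residual beyond $D/\eta$.
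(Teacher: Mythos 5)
Your construction matches the paper's almost exactly: the same primal program $\cp_5$/dual $\cp_6$ (the ball constraint contributing $-D\sum_t a_t$ and the genuine Fenchel dual $\|b_t\|_2\leqs 1$ of the movement norm), the same regularizer $R(x)=\|x-k\|^2/2$ with the $\omd$/projection update, the same dual assignments $\eta b_{t+1}=\pm(x_t-k)$ and $\eta a_t=\|y_t-x_t\|_2$ with tightness of the $a_t$-constraint, and an identical movement-cost argument ($M_t\leqs\eta\|c_t\|_2$ via Lemma~\ref{lem:proj}, then $k_{\min}\geqs D+\eps$ and $c_t\geqs 0$ forcing a per-round lower bound of $\eps\|c_t\|_1$ on the charge to $\opt$; the paper routes this through the dual increment $\Delta{\mathcal D}_t=k\circ c_t-D a_t\geqs\eps\|c_t\|_1$ rather than through $\opt$'s service cost directly, but the content is the same).

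The one place you diverge is the service-cost bound, and it is the step that needs repair. You telescope the proximal inequality against $\opt$'s \emph{moving} trajectory $u_t$ and claim the cross terms $B_R(u_t,x_{t-1})-B_R(u_{t-1},x_{t-1})$ are ``absorbed into the movement cost of $\opt$.'' Computing them for $R(x)=\|x-k\|^2/2$ gives $(u_t-u_{t-1})\circ\bigl(\tfrac{u_t+u_{t-1}}{2}-x_{t-1}\bigr)\leqs 2D\,\|u_t-u_{t-1}\|_2$, and after dividing by $\eta$ this charges $\opt$'s movement at rate $2D/\eta$, not $1$; so your argument yields $S_1\leqs S_1(\opt)+\tfrac{2D}{\eta}M(\opt)+O(D^2/\eta)$, which only becomes the claimed $S_1\leqs\opt+O(D/\eta)$ under the additional (unstated) hypothesis $\eta\geqs 2D$. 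The paper avoids the comparator trajectory entirely: it telescopes against a single fixed $u$, uses the projection identity $[\nabla R(x_t-k)-\nabla R(y_t-k)]\circ(x_t-k)=-\eta D a_t$ (complementary slackness of the ball constraint) to turn the correction terms into exactly $-\eta D\sum_t a_t$, and then invokes weak duality, so that $\opt$'s movement is charged at unit rate through the constraint $\|b_t\|_2\leqs 1$ rather than through Bregman cross terms. Either state and use $\eta\geqs 2D$ (harmless for the final $\eta=\Theta(D)$ choice, and anyway needed for $\|b_t\|_2\leqs 1$), or switch to the weak-duality route for the service cost. Separately, your remark that for $\eta<D$ one can ``proportionally rescale $b_t$ without affecting the bookkeeping'' is not right as stated, since rescaling $b_t$ perturbs the constraints $a_t\geqs\|c_t+b_{t+1}-b_t\|_2$ whose tightness you rely on.
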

\begin{proof}
For lack of space, we present the primal convex program and its Fenchel dual in section~\ref{subsec:2BallDualFeas} of the Appendix. The dual objective function is ${\mathcal D} = \sum_{t=1}^T c_t \circ k - D\sum_{t=0}^T a_t$ where $a_t$ is the dual variable corresponding to the constraint that $x_t \in K$. We use the regularizer $R(x) = \frac{\| x - k \|^2}{2}$ and the update is as follows:
\begin{eqnarray}
  \nabla R (y_t-k) & = & \nabla R (x_{t - 1}-k) - \eta c_t \nonumber\\
  x_t-k & = & \Pi (y_t-k) \nonumber
\end{eqnarray}
\begin{eqnarray}
t \geqslant 1: {\eta}a_t &=& \|\nabla R(y_t-k) - \nabla R(x_t-k) \| \nonumber \\
{\eta}a_0 &=& \| \nabla R(x_0-k) \| \nonumber
\end{eqnarray}

Like in earlier sections, we can claim the feasibility of these updates (cf. Section~\ref{subsec:2BallDualFeas} in Appendix). The service cost may be analysed in a manner similar to that of Sections~\ref{sec:oco},\ref{sec:drifting_expert} and is deferred to  Section~\ref{sec:1LA2normservice} in the Appendix. The more interesting component of the objective function is the movement cost and we bound it as follows. Let $M_t$ denote the movement cost accrued in the $t^{th}$ iteration, and let 
$\Delta {\mathcal D}_t$
denote the change in the dual objective in iteration $t$. Claim~\ref{claim:pythagorean} proved in Section~\ref{sec:1LA2normservice} of Appendix gives us $a_t \leqs \|c_t\|_2$. By Lemma~\ref{lem:proj} we have that $M_t  =  \| x_{t} - x_{t - 1} \|\leqslant  \| \nabla R (y_t) - \nabla R (x_{t - 1}) \| =  \eta \| c_t \|_2$. Note that for an arbitrary vector $v$, we have that $\|v\|_2 \leqs \|v\|_1$. 
It follows that $M_t \leqs \eta \| c_t \|_1$, and $a_t \leqs \| c_t \|_1$. 

Since {\em every} component of $c_t$ is {\em non-negative}, $\|c_t\|_1 \leqs \tmmathbf{1}\circ c_t$. Thus, $k\circ c_t \geqs (D+ \eps) \tmmathbf{1}\circ c_t \geqs (D+\eps)\|c_t\|_1 \geqs \eps \|c_t\|_1 + D\cdot a_t$.

 Transposing terms, we have that $\Delta {\mathcal D}_t = k \circ c_t - D\cdot a_t \geqs \eps \|c_t\|_1$. Therefore, we have that $M_t \leqs \eta \|c_t\|_1 \leqs \frac{\eta}{\eps}{\Delta {\mathcal D}_t}$; summing over all $t$ gives the required bound for the movement cost.

This concludes the proof of the theorem.
\end{proof}
\QED

In the statement of Theorem~\ref{thm:1lookahead2norm}, via setting the value of $\eta = D$  we obtain a competitive ratio of $D/\eps$ for the $\onela$ problem for the $\ell_2$-norm where the $2$-ball is centered at $k$ with $k_{\min} \geqs (D+\eps)$.

%\input{1norm}

%\subsubsection*{Acknowledgments}

%Use unnumbered third level headings for the acknowledgments. All
%acknowledgments go at the end of the paper. Do not include 
%acknowledgments in the anonymized submission, only in the 
%final paper. 

\newpage

%\subsubsection*{References}

\bibliography{main}

\begin{thebibliography}{10}

\bibitem{abernethy10}
Jacob Abernethy, Peter~L. Bartlett, Niv Buchbinder, and Isabelle Stanton.
\newblock A regularization approach to metrical task systems.
\newblock In {\em ALT}, pages 270--284, 2010.

\bibitem{barman13}
Lachlan Andrew, Siddharth Barman, Katrina Ligett, Minghong Lin, Adam Meyerson,
  Alan Roytman, and Adam Wierman.
\newblock {A Tale of Two Metrics: Simultaneous Bounds on Competitiveness and
  Regret}.
\newblock In {\em ACM Conference on Learning Theory (COLT 2013)}, 2013.

\bibitem{bansal10}
Nikhil Bansal, Niv Buchbinder, and Joseph Naor.
\newblock Towards the randomized k-server conjecture: A primal-dual approach.
\newblock In {\em SODA}, pages 40--55, 2010.

\bibitem{beck03}
Amir Beck and Marc Teboulle.
\newblock Mirror descent and nonlinear projected subgradient methods for convex
  optimization.
\newblock {\em Oper. Res. Lett.}, 31(3):167--175, 2003.

\bibitem{blum97}
Avrim Blum and Carl Burch.
\newblock On-line learning and the metrical task system problem.
\newblock In {\em COLT}, pages 45--53, 1997.

\bibitem{blum00}
Avrim Blum and Carl Burch.
\newblock On-line learning and the metrical task system problem.
\newblock {\em Machine Learning}, 39(1):35--58, 2000.

\bibitem{blum99}
Avrim Blum, Carl Burch, and Adam Kalai.
\newblock Finely-competitive paging.
\newblock In {\em FOCS}, pages 450--458, 1999.

\bibitem{borodin98}
Allan Borodin and Ran El-Yaniv.
\newblock {\em Online computation and competitive analysis}.
\newblock Cambridge University Press, 1998.

\bibitem{BV04}
Stephen Boyd and Lieven Vandenberghe.
\newblock {\em Convex Optimization}.
\newblock Cambridge University Press, New York, NY, USA, 2004.

\bibitem{buchbinder12}
Niv Buchbinder, Shahar Chen, Joseph Naor, and Ohad Shamir.
\newblock Unified algorithms for online learning and competitive analysis.
\newblock {\em Journal of Machine Learning Research - Proceedings Track},
  23:5.1--5.18, 2012.

\bibitem{bianchi12}
Nicol{\`o} Cesa-Bianchi, Pierre Gaillard, G{\'a}bor Lugosi, and Gilles Stoltz.
\newblock A new look at shifting regret.
\newblock {\em CoRR}, abs/1202.3323, 2012.

\bibitem{bianchi06}
Nicol{\`o} Cesa-Bianchi and G{\'a}bor Lugosi.
\newblock {\em Prediction, learning, and games}.
\newblock Cambridge University Press, 2006.

\bibitem{mahdavi12}
Chao-Kai Chiang, Tianbao Yang, Chia-Jung Lee, Mehrdad Mahdavi, Chi-Jen Lu, Rong
  Jin, and Shenghuo Zhu.
\newblock Online optimization with gradual variations.
\newblock {\em Journal of Machine Learning Research - Proceedings Track},
  23:6.1--6.20, 2012.

\bibitem{crammer10}
Koby Crammer, Yishay Mansour, Eyal Even-Dar, and Jennifer~Wortman Vaughan.
\newblock Regret minimization with concept drift.
\newblock In {\em COLT}, pages 168--180, 2010.

\bibitem{hazan10}
Elad Hazan and Satyen Kale.
\newblock Extracting certainty from uncertainty: regret bounded by variation in
  costs.
\newblock {\em Machine Learning}, 80(2-3):165--188, 2010.

\bibitem{hazan09}
Elad Hazan and C.~Seshadhri.
\newblock Efficient learning algorithms for changing environments.
\newblock In {\em ICML}, page~50, 2009.

\bibitem{herbster98}
Mark Herbster and Manfred~K. Warmuth.
\newblock Tracking the best expert.
\newblock {\em Machine Learning}, 32(2):151--178, 1998.

\bibitem{shalev-survey12}
Shai Shalev-Shwartz.
\newblock Online learning and online convex optimization.
\newblock {\em Foundations and Trends in Machine Learning}, 4(2):107--194,
  2012.

\bibitem{shalev06}
Shai Shalev-Shwartz and Yoram Singer.
\newblock Convex repeated games and fenchel duality.
\newblock In {\em NIPS}, pages 1265--1272, 2006.

\bibitem{zinkevich03}
Martin Zinkevich.
\newblock Online convex programming and generalized infinitesimal gradient
  ascent.
\newblock In {\em ICML}, pages 928--936, 2003.

\end{thebibliography}
\newpage
\noindent 
{\large \bf Appendix:}

\section{Projection Lemma}
\label{app:proj}
\begin{proof}[of Lemma~\ref{lem:proj}]
 Since $x_1$ is the Bregman projection of $y_1$ on 
  the convex body $K$, we have that the gradient of the Bregman divergence at
  the point $x_1$ cannot have any direction of decrease that leads inside the
  convex body $K$, in particular in the direction $x_2 - x_1$ (noting that $x_2\in K$). Thus,
  \begin{eqnarray*}
    {}[ \nabla R ( x_1) - \nabla R ( y_1)] \circ ( x_2 - x_1) & \geqslant & 0
  \end{eqnarray*}
  Analogously, considering the points $x_2$ and $y_2$ and letting $x_1$ be
  the ``other'' point in $K$, we have:
  \begin{eqnarray*}
    {}[ \nabla R ( x_2) - \nabla R ( y_2)] \circ ( x_1 - x_2) & \geqslant & 0
  \end{eqnarray*}
  Adding the two inequalities and a little manipulation gives us:
  \begin{eqnarray*}
    {}[ \nabla R ( y_1) - \nabla R ( y_2)] \circ ( x_1 - x_2) & \geqslant & [
    \nabla R ( x_1) - \nabla R ( x_2)] \circ ( x_1 - x_2)\\
    & = & B_R ( x_1, x_2) + B_R ( x_2, x_1) \\
    & \geqslant & \sigma{\cdot}\| x_1 - x_2 \|^2
  \end{eqnarray*}
  where the last inequality follows from the fact that $R$ is a $\sigma$-strongly convex regularizer. A final application of H\"{o}lder's inequality gives us the result. 

  Also note that this gives us:
  \begin{center}
      $\frac{1}{\sigma} \| \nabla R ( y_1) - \nabla R ( y_2) \|^2_{\star} \geqslant [ \nabla R (
    y_1) - \nabla R ( y_2)] \circ ( x_1 - x_2) \geqslant B_R ( x_1, x_2)$
  \end{center}
\end{proof}

\section{Bregman Projections and Dual Feasibility}
\label{sec:dualFeas}	
In this section we make explicit the connection between the dual updates we make in the various sections and the Bregman projections along the regularizer $R$ which gives the prediction points at every iteration. This allows us to show dual feasibility of our updates when the convex body under consideration are either the probability simplex ($\ell_1$-norm) or the $2$-ball centered at $k$ ($\ell_2$-norm)

\subsection{Probability Simplex}
\label{app:bregman}
In this section, we will consider the {\em shifted} regularizer $R(x) = \sum_i (x_i + \theta) \ln (x_i + \theta)$  for a specific $\theta$. For the body of this paper, we have considered the value of $\theta = \frac{1}{e^{\eta\alpha}- 1}$; however this is not relevant for the current discussion. Let us consider the Bregman projection along the regularizer $R$ on to the convex body, the probability simplex $K = \{ x: \|x\|_1 = 1, x_i \geqs 0\}$. 
Given a point $y^{*}$, let $x^{*}$ be its Bregman projection along the regularizer $R$:
%That is, given a point \widetilde{y}, we would like to compute the point $\widetilde{x}$ such that 
\[
x^{*} = \operatorname*{arg\,min}_{x\in K} B_R(x, y^{*})
\]

We want to prove the following 
\begin{claim}
\label{claim:bregman}
Consider the Bregman projection $x^{*}$ of the point $y^{*}$ onto the body $K$. 
There exists a constant $\lambda$ independent of $i$ such that the following hold:
\begin{eqnarray*}
\forall{i}: x^{*}_i > 0 \Rightarrow [\nabla R(y^{*}) - \nabla R(x^{*})]_i &=& \lambda \\
\forall{i}:  [\nabla R(y^{*}) - \nabla R(x^{*})]_i &\leqs& \lambda
\end{eqnarray*}
\end{claim}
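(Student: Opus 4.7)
\medskip

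\noindent\textbf{Proof plan for Claim~\ref{claim:bregman}.}
The plan is to view the Bregman projection as the solution of a convex optimization problem and read off the claimed pointwise identities directly from the KKT conditions. Since $x^{*} = \arg\min_{x \in K} B_R(x, y^{*})$ and $B_R(x,y^{*}) = R(x) - R(y^{*}) - \nabla R(y^{*})\circ(x-y^{*})$, the gradient of the objective in $x$ is simply $\nabla R(x) - \nabla R(y^{*})$. With the shifted regularizer $R(x) = \sum_i (x_i + \theta)\ln(x_i+\theta)$ (for $\theta>0$), the objective is strictly convex and continuously differentiable on an open set containing the simplex, and the feasible set (probability simplex) satisfies Slater's condition (e.g.\ the uniform distribution lies in its relative interior), so KKT is both necessary and sufficient at $x^{*}$.

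The next step is to set up the Lagrangian: with a scalar multiplier $\nu$ for the equality $\sum_i x_i = 1$ and nonnegative multipliers $\mu_i \geqs 0$ for the constraints $x_i \geqs 0$, stationarity yields
\[
[\nabla R(x^{*}) - \nabla R(y^{*})]_i \;=\; \nu + \mu_i \qquad \forall i,
\]
while complementary slackness gives $\mu_i\, x_i^{*} = 0$ for every $i$. Defining $\lambda := -\nu$, this rewrites as $[\nabla R(y^{*}) - \nabla R(x^{*})]_i = \lambda - \mu_i$, and since $\mu_i \geqs 0$ we immediately obtain the pointwise upper bound $[\nabla R(y^{*}) - \nabla R(x^{*})]_i \leqs \lambda$ for all $i$. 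For coordinates with $x_i^{*} > 0$, complementary slackness forces $\mu_i = 0$, so equality holds: $[\nabla R(y^{*}) - \nabla R(x^{*})]_i = \lambda$. This is exactly the two statements of the claim, with a single $\lambda$ independent of $i$ (namely the Lagrange multiplier attached to the equality $\sum_i x_i = 1$).

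The only genuinely delicate point, and hence the step I would treat most carefully, is the differentiability of $R$ on the closed simplex. With the vanilla negative-entropy regularizer $\sum_i x_i \ln x_i$, $\nabla R$ would blow up at boundary points ($x_i = 0$), and KKT would have to be invoked with care (or via a barrier argument). The shift by $\theta > 0$ removes this issue: $\nabla R(x)_i = \ln(x_i + \theta) + 1$ is finite and continuous on the entire simplex, which is precisely why $[\nabla R(y^{*}) - \nabla R(x^{*})]_i$ is even well-defined at inactive coordinates $x_i^{*} = 0$ and why the upper-bound inequality in the claim is meaningful. Once this is noted, the KKT derivation above goes through unchanged, and the claim follows.
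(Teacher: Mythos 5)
Your proof is correct and follows essentially the same route as the paper's: form the Lagrangian of the projection problem with a multiplier for $\sum_i x_i = 1$ and nonnegative multipliers for $x_i \geqs 0$, apply stationarity to get $[\nabla R(y^{*}) - \nabla R(x^{*})]_i = \lambda - \mu_i$, and use $\mu_i \geqs 0$ together with complementary slackness (justified via Slater's condition) to conclude. Your added remark on why the shift by $\theta$ is needed for $\nabla R$ to be finite on the boundary of the simplex is a worthwhile clarification that the paper leaves implicit.
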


\begin{proof}
We will form the Lagrangian of the constrained Bregman projection problem:
\[
x^{*} = \operatorname*{arg\,min}_{x\in K} B_R(x, y^{*})
\]
Thus, we will have dual variables $\kappa_i \geqs 0$ for each constraint $x_i \geqs 0$, and 
$\lambda$ for the constraint $\sum_i x_i = 1$. The Lagrangian expression (after expanding out the definition of the Bregman divergence) is:
\[
\cL(\kappa_i, \tau) = R(x) - R(y^{*}) - \nabla R(y^{*})\circ (x- y^{*}) - \kappa_i{x_i} - \lambda{(1 - \sum_i x_i)}
\]
Thus the Lagrangian dual problem is 
\[
\max_{\kappa\geqs \tmmathbf{0}, \lambda} \min_{x} \cL(\kappa, \lambda)
\]
For the inner problem, by differentiating wrt $x$, at the optimum value $x^{*}$, we get that there are
optimal Lagrangian dual values $\kappa_i \geqs 0$ and $\lambda$ such that:
\[
[\nabla R(x^{*}) - \nabla R(y^{*})]_i - \kappa_i + \lambda = 0 
\]
Rearranging we have: 
\[
[\nabla R(y^{*}) - \nabla R(x^{*})]_i  = \lambda - \kappa_i \nonumber 
\]
Since $\kappa_i \geqs 0$ for all $i$, we have:
\[
[\nabla R(y^{*}) - \nabla R(x^{*})]_i  \leqs \lambda
\]
Also note that when complementary slackness holds, it is true that if $x_i > 0$, then $\kappa_i = 0$. 
So, under that condition, we have that if $x_i > 0$, then $[\nabla R(y^{*}) - \nabla R(x^{*})]_i  = \lambda$.

In the specific setting that we are considering, Slater's condition holds and hence strong duality of the Lagrangian and complementary slackness holds. For details on Slater's condition and strong duality, refer to \cite{BV04}.

\end{proof}

\QED

\subsection{$2$-ball with center $k$}
\label{subsec:2BallDualFeas}
We set out by writing the primal and Fenchel dual convex programs for both 0-lookahead and 1-lookahead settings where the convex body under consideration is the the $2$-ball of diameter $2D$ centered at $k$. Define $\|\cdot\|$ to be $\ell_2$-norm.
\[
  \begin{array}{cll}
\cp_5:&\\[0.25em]
  \min & {\displaystyle\sum^T_{t = 1} c_t \circ x_t + \sum_{t =
  1}^T \| z_t \| 
}\\[1.0em]
  {\rm s.t} & \| x_t - k \|_2 \leq D & \forall t = 0 \ldots T \\[0.25em]
  & z_t = x_t - x_{t - 1} & \forall t = 1 \ldots T  \\
  \end{array}
\quad
  \begin{array}{||cll}
\cp_6:&\\[0.25em]
  \max & {\displaystyle\sum_{t = 1}^T c_t \circ k - D\cdot\sum_{t = 0}^T a_t}\\[1em]
  \rm{s.t} & \| b_t \|_2 \leq 1 & \forall t = 1 \ldots T \\[0.25em]
  & a_t \geq \| b_{t + 1} - b_t - c_t \|_2 & \forall t = 1 \ldots T  \\[0.25em]
  & a_0 \geq \| b_1 \|_2 \\
\end{array}
\]
Recall that the mirror function used for $2$-ball is $R (x) = \frac{\| x - k \|^2}{2} \nosymbol$. At
the $t$th iteration, we receive a cost vector $c_t$ and then predict $x_t$ using the updates:
\begin{eqnarray}
  \nabla R (y_t-k) &=& \nabla R (x_{t - 1}-k) - \eta c_t \nonumber\\
  x_t-k & = & \Pi (y_t-k) \nonumber
\end{eqnarray}
The corresponding dual updates are:
\begin{eqnarray*}
t \geqslant 0: {\eta}b_{t + 1} &=& -\nabla R (x_t - k)\\
t \geqslant 1: {\eta}a_t &=& \|\nabla R(y_t-k) - \nabla R(x_t-k) \| \\
{\eta}a_0 &=& \| \nabla R(x_0-k) \|
\end{eqnarray*}
Let us check for feasibility of the dual constraints. The first set of constraints hold due to the fact that $\|x_t - k\|\leqslant D$ and the fact that eventually the parameter $\eta$ would be set to $D$ to obtain the claimed competitive ratio. The second inequality is indeed satisfied with equality by the choice of $a_t$.
\begin{claim}
At a particular iteration $t$, let $a_t$ be the above dual update. If $a_t > 0$, then
\begin{equation}
\label{eqn:2_norm_identity}
\forall i, \eta a_t\cdot \frac{[x_t - k]_i}{\|x_t-k\|}  =  [\nabla R(y_{t}-k) -\nabla R(x_{t}-k)]_i
\end{equation}
\end{claim}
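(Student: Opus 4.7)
My plan is to derive the identity as a consequence of the KKT conditions for the Bregman projection problem that defines $x_t$. The projection step sets
\[
x_t - k \;=\; \operatorname*{arg\,min}_{w :\, \|w\| \leqslant D} B_R(w + k, \, y_t),
\]
where $R(x) = \tfrac{1}{2}\|x-k\|^2$, so $\nabla R$ (in the shifted coordinate $w = x-k$) is just the identity map. I will attach a multiplier $\mu \geqslant 0$ to the single constraint $\tfrac{1}{2}(\|x-k\|^2 - D^2) \leqslant 0$ and form the Lagrangian of this convex program, which satisfies Slater's condition whenever $D > 0$, so strong duality and complementary slackness hold (as in the discussion following Claim~\ref{claim:bregman}).

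Differentiating the Lagrangian in $x$ and evaluating at the optimum $x_t$ yields the stationarity condition
\[
[\nabla R(x_t) - \nabla R(y_t)] \;+\; \mu\,(x_t - k) \;=\; 0,
\]
which, using the notation of the excerpt, is the coordinatewise equality
\[
[\nabla R(y_t - k) - \nabla R(x_t - k)]_i \;=\; \mu\,[x_t - k]_i \qquad \forall i.
\]
Taking norms on both sides gives $\eta a_t = \mu\,\|x_t - k\|$ by the definition ${\eta}a_t = \|\nabla R(y_t - k) - \nabla R(x_t - k)\|$. Assuming $a_t > 0$, this forces $\mu > 0$ and $\|x_t - k\| > 0$, so by complementary slackness the constraint is tight, i.e.\ $\|x_t - k\| = D$, and moreover $\mu = \eta a_t / \|x_t - k\|$. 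Substituting this value of $\mu$ back into the coordinatewise stationarity relation yields exactly the desired identity (\ref{eqn:2_norm_identity}).

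The only subtle step is the bookkeeping around the hypothesis $a_t > 0$: one must argue that it rules out both $\mu = 0$ (which would make the left-hand side zero) and $\|x_t - k\| = 0$ (which would make the ratio in (\ref{eqn:2_norm_identity}) ill-defined). The first is immediate from the stationarity equation, and the second follows because $\mu > 0$ combined with complementary slackness puts $x_t$ on the sphere of radius $D$ around $k$. Everything else is direct algebra on the KKT optimality conditions, so I do not expect a nontrivial obstacle beyond carefully matching the shifted-coordinate notation $\nabla R(\cdot - k)$ used in the excerpt.
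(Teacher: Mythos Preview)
Your proposal is correct, but it takes a different route from the paper's own proof. The paper does not invoke KKT conditions here; instead it uses the closed-form Euclidean projection onto the ball, $x_t - k = D\,(y_t-k)/\|y_t-k\|$, and then directly computes both sides of the identity in coordinates, checking that they match once one substitutes $\|x_t-k\|=D$ (which holds whenever $a_t>0$ because projection was nontrivial) and $\eta a_t = \|y_t-k\| - D$.

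Your Lagrangian argument is essentially the $2$-ball analogue of the paper's Claim~\ref{claim:bregman} for the simplex, and it buys uniformity: the same reasoning would establish the identity for any smooth norm-ball constraint without needing an explicit projection formula. The paper's computation is more concrete and makes the value of the multiplier visible ($\mu = (\|y_t-k\|-D)/D$), but is specific to the Euclidean case. Either argument is fully adequate for the claim.
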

\begin{proof}
The proof for existence of the identities uses the fact that the Bregman Projection of a point $y_{t}-k$ along $R$ onto the $2$-ball is given by 
\begin{equation}
\label{eqn:2_norm_proj}
x_{t}-k = D\cdot\frac{y_{t}-k}{\|y_{t}-k\|} 
\end{equation}
The RHS of (\ref{eqn:2_norm_identity}) gives
\begin{eqnarray*}
\frac{[y_t - k]_i - [x_t - k]_i}{\eta} & = & \frac{1}{\eta}\cdot[y_t - k]_i \left( 1 -\frac{D}{\| [y_{t} - k \|_2} \right)\\
& = & \frac{1}{\eta \cdot \| y_{t + 1} - k \|_2}[y_t - k]_i (\| y_{t} - k \|_2 - D)
\end{eqnarray*}
At this point, we make a crucial observation which effectively follows from the complementary slackness conditions of the Bregman projection convex program. When $a_t > 0$, it indicates that $y_t$ has been projected back to $x_t$ inside the $2$-ball. This implies $\| x_{t} - k \|_2 = D$ by~(\ref{eqn:2_norm_proj}). So 
\begin{eqnarray*}
a_t . \frac{[x_{t} - k]_i}{\| x_{t} - k \|_2}& = & \frac{a_t\cdot [x_t - k]_i}{D} \\
& = & \frac{1}{\| y_{t + 1} - k \|_2} .a_t . [y_t - k]_i
\end{eqnarray*}
So if $a_t = \frac{1}{\eta} (\| y_{t + 1} - k \|_2 - D)$, or $a_t = \frac{1}{\eta} (\| y_{t + 1} - k \|_2 - \| x_{t + 1} - k \|_2)$ then identity is satisfied for all $i$.
\QED \\
\end{proof}

\section{$0$-lookahead : OCO}
\label{sec:oco_appendix}
\begin{proof}[of Lemma~\ref{lem:oco_base}]

\textbf{Part 1:}
\begin{eqnarray}
\eta \nosymbol \sum_{t = 1}^T c_t \circ x_{t - 1} & = & 
\sum^T_{t
  = 1} [ \nabla R ( x_{t - 1}) - \nabla R ( y_t)] \circ x_{t - 1}
\nonumber\\
  & = & \sum^T_{t = 1} [ \nabla R ( x_{t - 1}) - \nabla R ( x_t)] \circ x_{t
  - 1} + \sum^T_{t = 1} [ \nabla R ( x_t) - \nabla R ( y_t)] \circ x_{t - 1}
  \nonumber\\
   & = & 
\sum^T_{t = 1} [ \nabla R ( x_{t - 1}) - \nabla R (
  x_t)] \circ ( x_{t - 1} - u_{t-1})
 + \nonumber\\
  &  & 
\sum^T_{t = 1} [ \nabla R ( x_{t - 1}) - \nabla R ( x_t)] \circ u_{t-1}
 + \sum^T_{t = 1} [ \nabla R ( x_t) - \nabla R ( y_t)] \circ x_{t - 1}
  \nonumber\\
&=& \underbrace{
\sum^T_{t = 1} [ B_R ( u, x_{t - 1}) - B_R ( u, x_t)]
}_\text{P} + \underbrace{\sum^T_{t = 1} B_R ( x_{t - 1}, x_t)}_\text{B} + \nonumber\\
&&\underbrace{
\sum^T_{t = 1} [ \nabla R ( x_{t - 1}) - \nabla R ( x_t)] \circ u
}_\text{Q} + 
\underbrace{
\sum^T_{t = 1} [ \nabla R ( x_t) - \nabla R ( y_t)] \circ x_{t - 1}
}_\text{C}
\end{eqnarray}
Last equality comes from the application of 3-point equality of Bregman projection.

\begin{eqnarray}
P+Q & = & \sum^{T-1}_{t = 1} [ B_R ( u_t, x_t) - B_R ( u_{t-1}, x_t)] + \sum^{T-1}_{t = 1} [ \nabla R(x_t) ( u_t, u_{t-1})] + A \nonumber \\
& = & \sum^{T-1}_{t=1}(R( u_{t}) - R ( u_{t-1})) + A \nonumber
\end{eqnarray}

\textbf{Part 2:}
Simplifying the expression of $\mathrm{B}$ using the definition of $B_R (x,y)$, we get:
\begin{eqnarray}
{B} & \leq & \sum^T_{t = 1} [ B_R ( x_{t-1}, x_t) + B_R ( x_t, x_{t-1})] \nonumber \\
 & = & \sum^T_{t = 1} [ \nabla R ( x_{t - 1}) - \nabla R ( x_t)]
  \circ ( x_{t - 1} - x_t) \nonumber\\
  & = & \sum^T_{t = 1} [ \nabla R ( x_{t - 1}) - \nabla R ( y_t)] \circ (
  x_{t - 1} - x_t) + \sum^T_{t = 1} [ \nabla R ( y_t) - \nabla R ( x_t)] \circ
  ( x_{t - 1} - x_t) \nonumber\\
  & \leq & \sum^T_{t = 1} \| \eta c_t \|_{\star}^2 + \sum^T_{t = 1} [ \nabla R (
  y_t) - \nabla R ( x_t)] \circ ( x_{t - 1} - x_t) \nonumber
\end{eqnarray}
Last inequality follows from the application of Projection Lemma~\ref{lem:proj} and OMD update rules.

Combining $\mathrm{B}$ and $\mathrm{C}$, we get:
\begin{center}
  $\mathrm{B} + \mathrm{C} \leqslant \sum^T_{t = 1} \| \eta c_t \|_{\star}^2 - \sum^T_{t = 1} [ \nabla R ( y_t) - \nabla R ( x_t)] \circ x_t $
\end{center}

Now we appeal to the complementary slackness (or shifted slackness) condition of dual variables $a_t$(Section~\ref{subsec:2BallDualFeas}). For both $1$-norm and $2$-norm this immediately gives us the following required bound:
\begin{eqnarray*}
\mathrm{B} + \mathrm{C} & \leqslant & \sum^T_{t = 1} \| \eta c_t \|_{\star}^2 - \eta \cdot D\sum^T_{t = 1} a_{t - 1}
\end{eqnarray*}

This result can be easily extended for general $p$-norm by using similar complementary slackness criteria.
\QED
\end{proof}

\section{$0$-lookahead : Drifting Adversaries}
\label{sec:drift}

\subsection{Drifting Regret for arbitrary norm}
\label{subsec:drift_p_norm}
In this section, we study the drifting regret model when the convex body is a $p$-norm ball ($1 < p \leqslant 2$). We recall, in this model regret is measured against the optimal sequence of vectors $u^{\star}_0, u^{\star}_1, \ldots ., u^{\star}_{T-1}$ such that $\sum_{t = 1}^{T-1} \| u_t - u_{t - 1} \|_p \leq L$.

First we present the convex program for this problem. Let $q$ be the dual norm of $p$. The primal program modelling $\opt$ is presented as $\cp_5$, and its Fenchel dual is presented as $\cp_6$.

\[
\begin{array}{cll}
\cp_5: &\\[0.25em]
\min &{\displaystyle  \sum^T_{t = 1} c_t \circ x_{t - 1}}\\[1.0em]
{\rm s.t.} &{\displaystyle 
\| x_t \|_p \leqslant 1}&{\forall} t \geqslant 0 \\[0.25em]
&{\displaystyle z_{t} = x_{t} - x_{t - 1}}&{\forall} i,t \geqslant 1\\[0.25em]
&{\displaystyle \sum_{t=1}^{T-1} \|z_t\|_p \leqslant L}&\
\end{array}
%\]
\quad
%\[
\begin{array}{||cll}
\cp_6: &\\[0.25em]
\max &{\displaystyle - \sum^{T-1}_{t = 0} a_t - \alpha{\cdot}L   }\\[1.0em]
{\rm s.t.} &{\displaystyle 
 a_t  \geqslant \|b_{t + 1} - b_{t} - c_{t + 1}\|_q}& {\forall} t\geqslant 1 \\[0.25em]
&{\displaystyle a_0 
  \geqslant \|b_1 - c_1\|_q}&t=0\\[0.25em]
&{\displaystyle \|b_t\|_q \leqslant \alpha} & {\forall} t\geqslant 1\\[0.25em]
&{\displaystyle a_t \geqslant 0, \alpha \geqslant 0}&{\forall} t \geqslant 0\
\end{array}
\]

\begin{theorem}
\label{thm:drift2norm}
Consider the convex programs $\cp_5$ and $\cp_6$. 
Let $c_1, c_2, \ldots, c_T$ be a sequence of {\em linear} cost functions and $R$ is a strongly convex regularizer w.r.t $p$-norm in the $p$-ball. Then there exists a dual incrementing algorithm that generates a sequence of primal vectors $x_0, x_1, \ldots, x_{T-1}$ along with a sequence of feasible dual variables $a_t$ ($t = 0, \ldots, T-1$), $b_{i,t}$ ($i = 1, \ldots, n$,$t = 1, \ldots, T$) and $\alpha$ such that for any ${u_0,u_1,\ldots,u_{T-1} } \in K$ the following holds: \\
\begin{eqnarray*}
\eta \sum_{t=0}^{T-1} c_t \circ x_{t-1} &\leqslant & \sum_{t=1}^{T-1} \| u_t - u_{t-1} \|_p \| \nabla R(u_t) \|_q + A + \eta^2 \sum_{t=1}^{T-1} \| c_t\|^2_q - \eta \sum_{t=1}^{T-1} a_{t-1}
\end{eqnarray*}
where $A=B_R(u_0,x_0)-B_R(u_{T-1},x_T)+\nabla R(x_0)\circ u_{0} - \nabla R(x_T)\circ u_{T-1}$. 
\end{theorem}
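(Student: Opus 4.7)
The plan is to follow the primal-dual template that produced Lemma~\ref{lem:oco_base} in Section~\ref{sec:oco}, now allowing the comparator sequence $u_0, u_1, \ldots, u_{T-1}$ to drift from iteration to iteration. I would run $\omd$ on the unit $p$-ball with a $\sigma$-strongly convex regularizer $R$ for the $\ell_p$-norm (e.g.\ $R(x) = \frac{1}{2(p-1)}\|x\|_p^2$ for $1 < p \leqs 2$), and set the dual variables in direct analogy with the OCO section: $\eta b_t = -\nabla R(x_t)$ for $t \geqs 1$, $\eta a_t = \|\nabla R(y_{t+1}) - \nabla R(x_{t+1})\|_q$ for $t \geqs 1$ (with $a_0$ defined analogously on the boundary), and $\eta \alpha = \sup_{x \in K}\|\nabla R(x)\|_q$, which is finite by compactness of $K$. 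Dual feasibility of $\cp_6$ then splits into three pieces: the constraint $a_t \geqs \|b_{t+1} - b_t - c_{t+1}\|_q$ is saturated by unwinding the $\omd$ identity $\nabla R(y_{t+1}) = \nabla R(x_t) - \eta c_{t+1}$; the bound $\|b_t\|_q \leqs \alpha$ is immediate from the choice of $\alpha$; and a ``shifted slackness'' identity expressing $[\nabla R(y_{t+1}) - \nabla R(x_{t+1})]$ componentwise as $\eta a_t$ times a unit-$q$-norm subgradient of $\|\cdot\|_p$ at $x_{t+1}$ follows from the KKT conditions of the Bregman projection, in the same spirit as~(\ref{eqn:2_norm_identity}) in Section~\ref{subsec:2BallDualFeas}.

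For the main identity, I would substitute $\eta c_t = \nabla R(x_{t-1}) - \nabla R(y_t)$ into the primal cost $\eta \sum_t c_t \circ x_{t-1}$, insert $\pm \nabla R(x_t) \circ x_{t-1}$ to extract the term $C := \sum [\nabla R(x_t) - \nabla R(y_t)] \circ x_{t-1}$, and then split $x_{t-1} = (x_{t-1} - u_{t-1}) + u_{t-1}$ in the remaining sum. The $3$-point equality applied to $[\nabla R(x_{t-1}) - \nabla R(x_t)] \circ (x_{t-1} - u_{t-1})$ yields $B_R(u_{t-1}, x_{t-1}) - B_R(u_{t-1}, x_t) + B_R(x_{t-1}, x_t)$, contributing both $B := \sum B_R(x_{t-1}, x_t)$ and a sum of Bregman terms that Abel-sums to $B_R(u_0,x_0) - B_R(u_{T-1},x_T) + \sum_{t=1}^{T-1}[B_R(u_t,x_t) - B_R(u_{t-1},x_t)]$. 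Summation by parts on $\sum [\nabla R(x_{t-1}) - \nabla R(x_t)] \circ u_{t-1}$ produces $\nabla R(x_0) \circ u_0 - \nabla R(x_T) \circ u_{T-1} + \sum_{t=1}^{T-1} \nabla R(x_t) \circ (u_t - u_{t-1})$. The key cancellation $B_R(u_t,x_t) - B_R(u_{t-1},x_t) + \nabla R(x_t) \circ (u_t - u_{t-1}) = R(u_t) - R(u_{t-1})$ then eliminates the cross terms and reassembles the expression into the exact shape of Lemma~\ref{lem:oco_base}: $\eta \sum c_t \circ x_{t-1} = \sum_{t=1}^{T-1}(R(u_t) - R(u_{t-1})) + A + B + C$.

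The bound $B + C \leqs \eta^2 \sum_t \|c_t\|_q^2 - \eta \sum_t a_{t-1}$ now follows verbatim from Part~2 of the appendix proof of Lemma~\ref{lem:oco_base}, invoking Lemma~\ref{lem:proj}, $\sigma$-strong convexity of $R$, and the shifted slackness identity above (the diameter factor is $D=1$ for the unit $p$-ball). The final step is to convert the telescoping quantity $\sum_{t=1}^{T-1}(R(u_t) - R(u_{t-1}))$ into the drift-dependent expression of the statement: by convexity of $R$, $R(u_t) - R(u_{t-1}) \leqs \nabla R(u_t) \circ (u_t - u_{t-1})$, and by H\"{o}lder's inequality this is at most $\|\nabla R(u_t)\|_q \|u_t - u_{t-1}\|_p$; summing over $t$ yields the drift term claimed. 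The main obstacle I anticipate is the shifted-slackness verification: unlike the $\ell_2$ case, Bregman projection onto the unit $p$-ball for general $p \in (1,2]$ admits no clean closed-form analogue of $x - k = D(y - k)/\|y - k\|_2$ used in~(\ref{eqn:2_norm_identity}), so the pointwise identity has to be extracted directly from the KKT system for the constrained projection, with the Lagrange multiplier of the $p$-norm constraint playing the role of $a_t$ and the corresponding subdifferential of $\|\cdot\|_p$ providing the directional structure.
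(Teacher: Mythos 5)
Your proposal follows essentially the same route as the paper: the paper's proof simply invokes Lemma~\ref{lem:oco_base} (already stated for a drifting comparator sequence) with the same $\omd$ updates and dual settings, and then converts the telescoping term $\sum_t (R(u_t)-R(u_{t-1}))$ into the drift expression via exactly the convexity-plus-H\"{o}lder step you describe. The only difference is that you unpack the proof of Lemma~\ref{lem:oco_base} in place and flag the shifted-slackness verification for general $p$ as a gap to be filled by KKT analysis --- a point the paper itself dispatches with a one-line remark --- so your write-up is, if anything, more careful than the original.
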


\begin{proof}
As before our updates are guided by the OMD paradigm. (See Section 2 reference). The dual updates are exactly same as that of Section [reference]. Note that the convex program $\cp_4$ has exactly one more dual variable as compared to $\cp_2$, viz. $\alpha$. While the other dual variables are set precisely as before, $\alpha$ is set as $\alpha =\frac{D}{\eta}$. Clearly, dual feasibility is maintained. 

Since $R$ is a convex function, 
\begin{eqnarray}
  R ( u_t) - R ( u_{t - 1}) & \leqslant & \nabla R ( u_t) \circ ( u_t - u_{t - 1})
  \leqslant \| ( u_t - u_{t - 1}) \|_2 \| \nabla R ( u_t) \| \nosymbol \nosymbol_2
  \nonumber
\end{eqnarray}
where the last inequality follows from H\"{o}lder's inequality. Plugging this in Lemma ~\ref{lem:oco_base}, we get
\begin{eqnarray}
  \eta S_0 & \leq & \sum^T_{t = 1}
  \| ( u_t - u_{t - 1}) \|_2 \| \nabla R ( u_t) \| \nosymbol \nosymbol_2 + A + \eta^2 \sum_{t=1}^{T} \| c_t \|_2^2 - \eta \sum_{t=1}^T a_{t-1}
  \nonumber  
\end{eqnarray}
\QED
\end{proof}

\begin{corollary}
\label{coro:drift_2_norm}
When convex body $K$ is $2$-ball and $R(x)=\frac{1}{2}\| x\|_2$ is the strongly convex regularizer w.r.t $2$-norm, Theorem~\ref{thm:drift2norm} gives the following bound:
\begin{eqnarray}
  \sum_{t=0}^{T-1} c_t \circ x_{t-1} & \leqslant & \frac{2}{\eta}(D(L+1) + {D}^2 ) + \eta \| c_t \|_2^2 + \left( - \frac{D}{2} \cdot \sum_{t=1}^T a_{t-1} - \alpha \cdot L \right)
  \nonumber  
\end{eqnarray}
where $D= \max_{u, v \in K} \| u - v \|_2$
\end{corollary}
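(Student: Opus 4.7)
The plan is to derive Corollary~\ref{coro:drift_2_norm} as a direct instantiation of Theorem~\ref{thm:drift2norm} with $p=q=2$ and $R(x)=\tfrac{1}{2}\|x\|_2^2$. Under this choice the regularizer simplifies in the two ways I will exploit: $\nabla R(x)=x$, so $\|\nabla R(u)\|_2=\|u\|_2$, and $B_R(u,v)=\tfrac{1}{2}\|u-v\|_2^2$, so the boundary term $A$ becomes a sum of explicit squared-norms and inner products.

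First I would instantiate Theorem~\ref{thm:drift2norm}, obtaining
$$\eta\sum_{t=0}^{T-1} c_t\circ x_{t-1}\;\leqslant\;\sum_{t=1}^{T-1}\|u_t-u_{t-1}\|_2\,\|u_t\|_2\;+\;A\;+\;\eta^2\sum_{t=1}^{T-1}\|c_t\|_2^2\;-\;\eta\sum_{t=1}^{T-1}a_{t-1},$$
and then bound each of the first two terms. For the drift term, I would use that every $u_t\in K$ satisfies $\|u_t\|_2\leqslant D$ (since $D=\max_{u,v\in K}\|u-v\|_2$ is the diameter of the $2$-ball) and combine this with the drift constraint $\sum_{t=1}^{T-1}\|u_t-u_{t-1}\|_2\leqslant L$ from $\cp_5$, yielding a contribution of at most $D\cdot L$. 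The extra additive $D$ in the corollary's bracket $D(L+1)$ will naturally come out when I absorb the boundary contribution at $t=0$.

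Next I would bound the endpoint term
$A=\tfrac{1}{2}\|u_0-x_0\|_2^2-\tfrac{1}{2}\|u_{T-1}-x_T\|_2^2+x_0\circ u_0-x_T\circ u_{T-1}.$
Here I drop the non-positive Bregman term $-\tfrac{1}{2}\|u_{T-1}-x_T\|_2^2$, bound $\tfrac{1}{2}\|u_0-x_0\|_2^2\leqslant \tfrac{D^2}{2}$ by definition of diameter, and apply Cauchy--Schwarz to each inner product together with $\|u\|_2,\|x_t\|_2\leqslant D$. Summing the resulting pieces yields $A\leqslant c\cdot D^2$ for a small absolute constant $c$, which is exactly the source of the $D^2$ inside the $D(L+1)+D^2$ factor of the corollary.

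Finally I would divide through by $\eta$, plug in the two bounds above, and use the fact that in the proof of Theorem~\ref{thm:drift2norm} the dual variable $\alpha$ is set to $D/\eta$ so that $\alpha L = DL/\eta$. Rewriting $DL/\eta$ as the sum of $2DL/\eta$ (absorbed inside the leading $\tfrac{2}{\eta}(D(L+1)+D^2)$ term) and $-\alpha L$ lets me present the right-hand side in the normalized ``dual objective'' form of the corollary, with the $-\sum a_{t-1}$ coefficient tracked through the scaling. The main (and only) obstacle is bookkeeping of constants -- choosing a common convention for the $2$-ball's radius versus diameter so that all the $D$ and $D^2$ contributions line up with the stated numeric factors; there is no conceptual difficulty beyond this routine algebra.
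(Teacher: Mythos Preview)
Your proposal is correct and follows essentially the same route as the paper's own proof: instantiate Theorem~\ref{thm:drift2norm} with $p=q=2$ and $R(x)=\tfrac{1}{2}\|x\|_2^2$, use $\nabla R(u)=u$ together with $\|u\|_2\leqslant D$ to bound the drift term by $DL$, bound $A$ via $B_R(u_0,x_0)=\tfrac{1}{2}\|u_0-x_0\|_2^2$ and crude norm/inner-product estimates, and then insert $\alpha=D/\eta$ to rewrite the result in dual-objective form. Your closing remark that the only obstacle is constant bookkeeping (radius versus diameter conventions) is apt --- the paper itself is somewhat loose on exactly this point.
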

\begin{proof}
Note that in $\cp_5$ we have $D=2$. However as we have seen earlier, the convex program easily extends for the $D$ Diameter ball. We observe $\| \nabla R (u) \|_2 \leqslant D = \eta \cdot \alpha$ $\forall u \in K$ where $K$ is the $2$-norm ball. Also $B_R(u_0,x_0)=\frac{1}{2} \| u_0-x_0\|_2^2 \leqslant 2D^2$. So we get, ${A} \leqslant 2{D}^2 + 2D$. So combining everything, from Theorem~\ref{thm:drift2norm} we get:
\begin{eqnarray}
  \eta S_0 & \leq & 2D(L+1) + 2{D}^2 + \eta^2 \| c_t \|_2^2 + \eta \left( - \frac{D}{2} \cdot  \sum_{t=1}^T a_{t-1} - \alpha \cdot L \right)
  \nonumber  
\end{eqnarray}
\QED
\end{proof}

\subsection{Drifting Expert}
\label{subsec:drift_expert}
\begin{proof}[of Theorem~\ref{thm:drift1norm}]
In Theorem~\ref{thm:drift2norm}, plugging $p=1$ \& $q=\infty$, we derive:
\begin{eqnarray*}
\eta \sum_{t=0}^{T-1} c_t \circ x_{t-1} &\leqslant & \sum_{t=1}^{T-1} \| u_t - u_{t-1} \|_1 \| \nabla R(u_t) \|_{\infty} + A + \eta^2 \sum_{t=1}^{T-1} \| c_t\|^2_q - \eta \sum_{t=1}^{T-1} a_{t-1}
\end{eqnarray*}
Now expanding the expression for Bregman Divergence in $A$, we get
\begin{eqnarray*}
A & = & (R(u_0)-R(x_0))-(R(u_{T-1})-R(x_T))+(\nabla R(x_0)\circ x_0 - \nabla R(x_{T}) \circ x_T)
\end{eqnarray*}
It is easy to show for $R(x)=\sum_i{(x_i+\theta)\ln{(x_i+\theta)}}$ that the above quantity is upper bounded by $6 \ln{n}$. Using $\eta \cdot \alpha = \ln{n}$, and $\sum_{t=1}^{T-1} \| u_t - u_{t-1} \|_1 = 2L$ we derive the statement of the Theorem.
\end{proof}

\section{The Setting of $1$-lookahead}
\label{1lookaheadappendix}
In this section, we present the remaining proofs for the setting of $1$-lookahead.

\subsection{$1$-lookahead for the $\ell_1$-norm}
\label{1lookahead1normappendix}
In this section, we will consider the $\mts$ problem. The $\mts$ problem 
has a rich history in the context of computer science as well as learning. 

Buchbinder et al~\cite{buchbinder12} consider the $\mts$ problem and
(effectively) give a primal-dual algorithm for this problem. They then connect
up the Multiplicative Weight Update ($\mwu$) setting as one having $0$-lookahead as
compared to $1$-lookahead in the $\mts$ problem. The $\lp$ for the $\alpha$-unfair
version of the $\mts$ problem is shown in $\lp_1$; the dual is written down as $\lp_2$.

Some important conceptual differences between the $1$-lookahead and the $0$-lookahead case 
need to be mentioned. In the $0$-lookahead case, we can get {\em additive error} bounds (and this is 
the {\em regret}). In the $1$-lookahead case however, one may only achieve {\em multiplicative error} bounds (and this is the {\em competitive ratio}). Note that in order to achieve multiplicative factors, it 
is necessary that $\opt$ always has a {\em non-negative} value. In order for this to hold true, we have to assume in the foregoing that all the cost functions $c_t$ satisfy $c_{i,t}\geqslant 0$. Such a stipulation was not necessary for the $0$-lookahead case (as in Sections~\ref{sec:oco} and \ref{sec:drifting_expert}). 

\[
  \begin{array}{cll}
\lp_1:&\\[0.25em]
    \min & {\displaystyle\sum_{t = 1}^T \sum_{i = 1}^n c_{i, t} \cdot x_{i, t} }&{\displaystyle +\sum_{t
    = 1}^T \sum_{i = 1}^n \alpha \cdot z_{i, t}
}\\[1.0em]
   {\rm s.t.} &{\displaystyle \sum_{i = 1}^n x_{i, t} = 1} & {\forall} t  \geqslant 0 \\[0.25em]
    & {\displaystyle z_{i, t} \geqslant x_{i, t} - x_{i, t -
    1}} &  {\forall} i,t\geqslant 1\\[0.25em]
    & {\displaystyle z_{i, t} \geqslant 0}&{\forall} i, t \geqslant 1\\
  \end{array}
\quad
 \begin{array}{||cll}
\lp_2:&\\[0.25em]
    \max  &{\displaystyle \sum_{t = 0}^T a_t}\\[1em]
    \rm{s.t.} & {\displaystyle b_{i, t + 1} \leqslant b_{i, t} +
    c_{i, t} - a_t}&{\forall} i, t \geqslant 1\\[0.25em]
     &{\displaystyle 0 \leqslant b_{i, t} \leqslant \alpha}& {\forall} i, t \\[0.25em]
    &{\displaystyle a_0 + b_{i, 1} \leqslant 0}& \forall i, t = 0 \\
  \end{array}
\]

Given the updates in Sections~\ref{sec:oco} and \ref{sec:drifting_expert}, we will consider
updates of the following form:

\begin{center}
  $\begin{array}{lll}
    {b_{t + 1}} & { =} & {\frac{\nabla R (
    \tmmathbf{1}) - \nabla R ( x_t)}{\eta}}
  \end{array}$
\end{center}
for a specific regularizer $R$. Note the addition of the term $\nabla R (\tmmathbf{1})$: this is so that the $b_{i,t}$'s may be positive (and each $x_{i,t}$ is upper bounded by $1$). 

Note that the index of $b_t$ has changed by $1$ as compared to the earlier updates. This reflects the fact that we are considering $1$-lookahead instead of $0$-lookahead.
Also note, that given this is the
$1$-lookahead setting, we can set
\begin{eqnarray*}
  \nabla R ( y_t) & = & \nabla R ( x_{t - 1}) - \eta c_t\\
  x_t & = & \Pi ( y_t)
\end{eqnarray*}

The dual variables $a_t$ will be set via an application of Claim~\ref{claim:bregman}; apply that Claim with $y^{*} = y_t$, and if the statement of that Claim yields the value $\lambda$, then set $a_t = - \lambda$.

Given the conditions of that Claim, we will therefore have:
\[
x_{i,t} > 0 \Rightarrow a_t = [\nabla R(x_t) - \nabla R(y_t)]_i
\]

\tmtextbf{Dual Feasibility:}

In order to check dual feasibility, we will prove a few things. Since $x_t$ is
the projection of $y_t$ according to the regularizer $R$, we see from the statement above 
that all the components of the vector $[ \nabla R ( x_t) - \nabla R ( y_t)]$ where $x_{i,t} > 0$
are equal.  Now we may readily check that all of the dual constraints (but one)
are satisfied. If $x_{i,t} > 0$, then in fact the corresponding dual constraint is {\em tight}. 

The only dual constraint that has not been considered is the constraint that
$\| b_t \|_{\infty} \leqslant \alpha$. Consider the vanilla version regularizer $R_0 ( x_t) =
\sum_i x_{i, t} \cdot \log x_{i, t}$ (this is the normalized negative entropy regularizer). We see that $b_{i, t} = ( \log 1 - \log
x_{i, t}) / \eta$. Thus, the quantity $\| b_t \|_{\infty}$ may be unbounded
because some $x_{i, t}$ may reach $0$. The idea is to \tmtextit{shift} the
arguments by an amount $\theta$ so that the modified $x_{i, t}$'s are bounded
away from $0$. Given this, let us modify the vanilla version regularizer as
$R ( x_t) = \sum_i ( x_{i, t} + \theta) \cdot \log ( x_{i, t} + \theta)$.
This gives us that:
\begin{eqnarray*}
  b_{i, t} & = & ( \log ( 1 + \theta) - \log ( x_{i, t} + \theta)) / \eta
\end{eqnarray*}
In order that every $b_{i, t}$ be upper bounded by $\alpha$, it suffices that
the above expression is bounded by $\alpha$ for the case $x_{i, t} = 0$ (this
is because the log function is monotone). Thus, we get the following
stipulation on $\theta$:
\begin{eqnarray*}
  ( \log ( 1 + \theta) - \log \theta) / \eta & \leqslant & \alpha\\
  \tmop{or}, \frac{1 + \theta}{\theta} & \leqslant & e^{\eta \alpha}\\
  \tmop{or} \nocomma, \theta & \geqslant & \frac{1}{e^{\eta \alpha} - 1}
\end{eqnarray*}
We set the value $\theta = \frac{1}{e^{\eta \alpha} - 1}$, and the updates are
made according to this \tmtextit{shifted} regularizer $R$.

In the following, we will estimate the service and movement costs. It will be expedient to denote 
$\wtxt$ as the {\em shifted} point $x_{t}$:
\[
\wtxit = x_{i,t} + \theta = x_{i,t} + \frac{1}{e^{\eta\alpha} -1}
\]
Given this notation, the regularizer $R(x_t)$ may be written as $R_0(\wtxt)$.

\tmtextbf{Service Cost:}

The service cost $S$ is $\sum^T_{t=1} c_t \circ x_t$. This may be bounded as follows:
\begin{eqnarray}
  \eta S &=& \eta \nosymbol \sum_{t = 1}^T c_t \circ x_{t } \\
&& {(\because c_{i,t}\geqslant 0 \;\forall i,t)}\\
&\leqslant& \eta \sum_{t = 1}^T c_t \circ \wtxt \\
& = & 
\sum^T_{t
  = 1} [ \nabla R ( x_{t - 1}) - \nabla R ( y_t)] \circ \wtxt
\nonumber\\
  & = & \sum^T_{t = 1} [ \nabla R ( x_{t - 1}) - \nabla R ( x_t)] \circ \wtxt + \sum^T_{t = 1} [ \nabla R ( x_t) - \nabla R ( y_t)] \circ \wtxt\\
&=& \sum^T_{t=1} [\nabla R(x_{t-1}) - \nabla R(x_t)]\circ (\wtxt - u) + \sum^T_{t=1} [\nabla R(x_{t-1}) - \nabla R(x_t)]\circ u + \sum^T_{t = 1} a_t \tmmathbf{1}   \circ \wtxt\\
&=& \sum^T_{t=1}[B_R(u, x_{t-1}) - B_R(u,\wtxt) - B_R(\wtxt,x_{t-1})] +[\nabla R(x_0) - \nabla R(x_T)]\circ u +
\eta \sum^T_{t = 1} a_t \tmmathbf{1} \circ \wtxt\nonumber\\
&\leqslant&  [B_R(u,x_0) - B_R(u,x_T)] + [\nabla R(x_0) - \nabla R(x_T)]\circ u + \eta\sum^T_{t=1} a_t\|\wtxt\|_1\nonumber\\
&=& [R(x_T) - R(x_0)] + [\nabla R(x_0)\circ x_0 - \nabla R(x_T)\circ x_T] +\eta \sum^T_{t=1} a_t\|\wtxt\|_1\nonumber
\end{eqnarray}

\tmtextbf{Movement Cost:}

We will use the following $1$-dimensional {\bf Projection Lemma} for the negative entropic regularizer.
The following proof is a simplification of the proof given in Buchbinder et al.~\cite{buchbinder12}.

\begin{lemma}
Given numbers $a, b\in \mathbb{R}^{+}$, we have that
\[
(a - b) \leqs a\cdot (\ln a - \ln b).
\]
\end{lemma}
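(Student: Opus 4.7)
The inequality $a - b \leq a(\ln a - \ln b)$ is equivalent, after dividing both sides by the positive quantity $a$, to
\[
1 - \frac{b}{a} \;\leqslant\; \ln\!\left(\frac{a}{b}\right).
\]
My plan is to reduce this to the well-known one-variable inequality $\ln t \leqslant t - 1$ for all $t > 0$ (equivalently, $e^{t-1} \geqslant t$). Setting $t = b/a > 0$, the target inequality becomes $1 - t \leqslant -\ln t$, i.e.\ $\ln t \leqslant t - 1$, which is exactly this standard fact.

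To justify $\ln t \leqslant t - 1$ cleanly, I would appeal to concavity of the logarithm: the tangent line to $\ln$ at $t=1$ has equation $y = t - 1$ (since $\ln 1 = 0$ and $(\ln)'(1) = 1$), and a concave function lies pointwise below any of its tangent lines. Alternatively, one can define $f(t) = t - 1 - \ln t$, observe $f(1) = 0$, $f'(t) = 1 - 1/t$, so $f$ is decreasing on $(0,1)$ and increasing on $(1, \infty)$, giving $f(t) \geqslant 0$ everywhere on $\mathbb{R}^+$.

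Substituting back $t = b/a$ and multiplying through by $a > 0$ yields the claimed inequality $(a - b) \leqslant a(\ln a - \ln b)$. There is no real obstacle here; the only mild subtlety is that the inequality must be stated for $a, b > 0$ (as in the hypothesis) so that the logarithms and the division by $a$ are well defined, and equality holds precisely when $a = b$.
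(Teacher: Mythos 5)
Your proof is correct: dividing by $a>0$ and substituting $t=b/a$ reduces the claim to the standard tangent-line inequality $\ln t \leqslant t-1$ for $t>0$, which you justify properly via concavity of $\ln$. The paper states this one-dimensional lemma without proof (treating it as a standard fact), so your argument supplies exactly the routine verification that was omitted; there is nothing to reconcile.
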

 Thus, we have that:
\begin{eqnarray*}
(\widetilde{x_{i,t}} - \widetilde{x_{i,t-1}}) &\leqs& \widetilde{x_{i,t}} \cdot [\nabla R(x_t) - \nabla R(x_{t-1})]_i\\
& = & \widetilde{x_{i,t}}\cdot{\eta}\cdot(a_t - c_{i,t})\\
& \leqs & \widetilde{x_{i,t}}\cdot{\eta}\cdot{a_t} 
\end{eqnarray*}

Let the set of coordinates $i$ where $\widetilde{x_{i,t}} \geqs \widetilde{x_{i,t-1}}$ be denoted
as $J$.
We then have that $M_t/2 = \sum_{i\in J} (\widetilde{x_{i,t}} - \widetilde{x_{i,t-1}}) \leqs \sum_{i \in J} \eta \widetilde{x_{i,t}} \cdot a_t \leqs \eta\cdot (1 +n\theta) a_t$.
This completes the bounding of the movement cost.

\subsection{Service Cost for 1-Lookahead in $\ell_2$-norm}
\label{sec:1LA2normservice}
We bound the service cost for 1LA setting where the convex body is the $2$-ball of diameter $2D$. We use the dual updates from Section~\ref{subsec:2BallDualFeas} and let $\mathcal D = \sum_{t=1}^T c_t \circ k - D\sum_{t=0}^T a_t$ be the dual objective as usual.
\begin{eqnarray}
  \eta S_1 = \eta \nosymbol \sum_{t = 1}^T c_t \circ x_{t} & = & 
\sum^T_{t=1}\eta k\circ c_t + \sum^T_{t
  = 1} [ \nabla R ( x_{t - 1} - k) - \nabla R ( y_t-k)] \circ (x_t - k)
\nonumber\\
  & = & \sum^T_{t=1}\eta k\circ c_t + \sum^T_{t = 1} [ \nabla R ( x_{t - 1}-k) - \nabla R ( x_t)-k] \circ (x_t-k) + \sum^T_{t = 1} [ \nabla R ( x_t-k) - \nabla R ( y_t-k)] \circ (x_t-k)
  \nonumber
\end{eqnarray}
We assume $u$ be any $T$ points in the $2$-ball. Further, for simplicity, let us work with the shifted vectors
$x' = x - k$ 
\begin{eqnarray}
  \eta \nosymbol S_1 - \sum^T_{t=1}\eta k\circ c_t & = & 
\underbrace{
\sum^T_{t = 1} [ \nabla R ( x'_{t - 1}) - \nabla R (
  x'_t)] \circ ( x'_{t} - u)
}_\text{A} + \nonumber\\
  &  & + 
\underbrace{
\sum^T_{t = 1} [ \nabla R ( x'_{t - 1}) - \nabla R ( x'_t)] \circ u
}_\text{B} + 
\underbrace{
\sum^T_{t = 1} [ \nabla R ( x'_t) - \nabla R ( y'_t)] \circ x'_t
}_\text{C}
  \nonumber
\end{eqnarray}
Using the $3$-point equality, we decompose $A$ further as:
\begin{eqnarray}
  A & = & \sum^T_{t = 1} [ \nabla R ( x'_{t - 1}) - \nabla R ( x'_t)] \circ
  ( x_t - u ) \nonumber\\
  & = & \sum^T_{t = 1} [ B_R ( u, x'_{t - 1}) - B_R ( u, x'_t)]
  - \sum^T_{t = 1} B_R ( x'_{t - 1}, x'_t) \nonumber\\
  & \leqslant & \underbrace{
  \sum^T_{t = 1} [ B_R ( u, x'_{t - 1}) - B_R ( u, x'_t)]
}_\text{D} \nonumber
\end{eqnarray}
The last inequality follows from the fact that Bregman divergence is always non-negative.
\begin{claim}
\label{claim:C_equals_minus_a_t}
Given the dual updates $a_t$ at each iteration, $\mathrm{C}=-D\sum^T_{t=1}\eta a_t$
\end{claim}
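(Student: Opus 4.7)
The plan is to exploit two structural facts: (i) the regularizer $R(x) = \tfrac{1}{2}\|x-k\|^2$ satisfies $\nabla R(x') = x'$ in the shifted coordinates $x' = x - k$, so each summand in $C$ collapses to $(x'_t - y'_t)\circ x'_t$; and (ii) the Bregman projection onto the $2$-ball of radius $D$ (centered at $k$) admits the explicit closed form already recorded in the excerpt as equation~(\ref{eqn:2_norm_proj}), namely $x'_t = D\, y'_t/\|y'_t\|$ whenever the projection is active.

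First I would split into two cases according to whether the dual variable $a_t$ is zero or strictly positive. If $a_t = 0$, then by the dual update $\eta a_t = \|y'_t - x'_t\|$ we have $y'_t = x'_t$, so the $t$-th term of $C$ vanishes and so does the corresponding term $-D\eta a_t$ on the right-hand side.

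If $a_t > 0$, the complementary-slackness-style observation made just after equation~(\ref{eqn:2_norm_proj}) gives $\|x'_t\| = D$, and the closed-form projection shows that $y'_t - x'_t$ is parallel to $x'_t$; more precisely, $y'_t - x'_t = \frac{\|y'_t\| - D}{D}\, x'_t$. Taking inner product with $x'_t$ yields
\[
(x'_t - y'_t)\circ x'_t \;=\; -\frac{\|y'_t\| - D}{D}\,\|x'_t\|^2 \;=\; -D\bigl(\|y'_t\| - D\bigr) \;=\; -D\,\eta a_t,
\]
where the last equality uses $\eta a_t = \|y'_t - x'_t\| = \|y'_t\| - D$, valid because $y'_t$ lies outside the ball and $x'_t$ is its radial projection to the boundary.

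Summing the per-iteration identity over $t = 1,\dots,T$ gives $C = -D\sum_{t=1}^T \eta a_t$, as claimed. The main (minor) obstacle is confirming that the ``parallel'' relationship $y'_t - x'_t \parallel x'_t$ holds precisely when $a_t > 0$; this is built into the projection formula and the dual-update definition, so no further work is needed beyond citing equation~(\ref{eqn:2_norm_proj}) and the complementary-slackness discussion from Section~\ref{subsec:2BallDualFeas}.
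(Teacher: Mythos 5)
Your proof is correct and follows essentially the same route as the paper's: both reduce each summand of $C$ to $-\eta a_t\|x'_t\|$ by using that $\nabla R(y'_t)-\nabla R(x'_t)$ is radial with magnitude $\eta a_t$, and then invoke the complementary-slackness fact that $\|x'_t\|=D$ whenever $a_t>0$. The only (cosmetic) difference is that you re-derive the radial relationship directly from the closed-form projection~(\ref{eqn:2_norm_proj}) and treat the $a_t=0$ case explicitly, whereas the paper cites identity~(\ref{eqn:2_norm_identity}) wholesale.
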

\begin{proof}
Using the identities (\ref{eqn:2_norm_identity}) at iteration $t$ whenever $a_t > 0$,
\begin{eqnarray*}
\mathrm{C} &=& \sum^T_{t = 1} [ \nabla R ( x'_t) - \nabla R ( y'_t)] \circ x'_t = \sum^T_{t = 1}\sum_{i=1}^n[ \nabla R ( x'_t) - \nabla R ( y'_t)]_i \cdot x'_{i,t} \\
&=& \sum^T_{t = 1}\sum_{i=1}^n(-\eta a_t\cdot \frac{x'_{i,t}}{\|x'_t\|})\cdot x'_{i,t} \\
&=& \sum^T_{t = 1}-\eta a_t\|x'_t\| \\
\end{eqnarray*}
At this point, we invoke the complementary slackness conditions explained in Section~\ref{subsec:2BallDualFeas} to claim that whenever $a_t$ is non-zero, the corresponding primal constraint $\|x'_t\|\leqslant D$ holds with equality. thus proving the claim. 
\QED \\
\end{proof}
 Both $\mathrm{B}$ and $\mathrm{D}$ telescopes. Hence, combining everything, we get
\begin{eqnarray*}
  \eta \nosymbol S_1 & \leqslant & \sum^T_{t=1}\eta k\circ c_t + [\nabla R ( x'_0) - \nabla R ( x'_T)]\circ u + B_R ( u, x'_0 ) - B_R ( u, x_T) - \eta\cdot D\sum^T_{t=1} a_t \\
  & = & \sum^T_{t=1}\eta\Delta{\mathcal D} + [\nabla R ( x'_0) - \nabla R ( x'_T)]\circ u + B_R ( u, x'_0 ) - B_R ( u, x_T)  
\end{eqnarray*}
With suitable choice of initial and final points and $\eta$, $S_1$ can be shown to have the bounds claimed in Theorem~\ref{thm:1lookahead2norm}

\begin{claim}
\label{claim:pythagorean}
Let $K$ be a $2$-ball of diameter $2D$ ($\| u\|_2 \leqslant D$ $\forall u \in K$). Let $y$ be a point (possibly outside $K$), and $x$ be the Bregman projection of $y$ on to the convex body $K$. Then for strongly convex regularizer $R(x)=\frac{1}{2}\| x\|_2^2$, the following statement is true:
\begin{eqnarray}
a_t \leqslant \| c_t\|_2, & \forall & t \geqslant 1
\end{eqnarray}
where $a_t$ and $c_t$ are respectively the dual updates and the cost vector at iteration $t$.
\end{claim}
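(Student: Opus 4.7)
The plan is to unwind the definitions of $a_t$ and the OMD updates under the regularizer $R(x)=\tfrac{1}{2}\|x\|_2^2$, reducing everything to elementary Hilbert space geometry, and then appeal to the non-expansive (``Pythagorean'') property of Euclidean projections onto a convex set, which gives the claim its name.

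First I would compute $\nabla R(u)=u$, so that the dual update $\eta a_t = \|\nabla R(y_t-k) - \nabla R(x_t-k)\|_2$ from Section~\ref{subsec:2BallDualFeas} collapses (the shift by $k$ cancels) to $\eta a_t = \|y_t - x_t\|_2$. Applying the same identification to the primal OMD update $\nabla R(y_t-k) = \nabla R(x_{t-1}-k) - \eta c_t$ yields $y_t - x_{t-1} = -\eta c_t$, so that $\|y_t - x_{t-1}\|_2 = \eta\|c_t\|_2$. At this point the entire Bregman machinery has degenerated to standard Euclidean calculations.

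The heart of the argument is the inequality $\|y_t - x_t\|_2 \leqslant \|y_t - x_{t-1}\|_2$. Under $R=\tfrac{1}{2}\|\cdot\|_2^2$ the Bregman divergence is $\tfrac{1}{2}\|\cdot-\cdot\|_2^2$, hence the Bregman projection $x_t = \Pi_K(y_t)$ is exactly the Euclidean projection of $y_t$ onto $K$; since $x_{t-1}\in K$ is a legitimate competitor for the minimizer of $\|y_t-u\|_2$ over $u\in K$, the inequality is immediate. Equivalently, one can invoke Lemma~\ref{lem:proj} taking the two source points to be $y_1=y_t$ and $y_2=x_{t-1}$ (the latter being its own projection), together with the first-order optimality of $x_t$, to obtain the sharper Pythagorean bound $\|y_t - x_t\|_2^2 + \|x_t - x_{t-1}\|_2^2 \leqslant \|y_t - x_{t-1}\|_2^2$, which trivially implies what we need.

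Chaining the three displays then gives $\eta a_t = \|y_t - x_t\|_2 \leqslant \|y_t - x_{t-1}\|_2 = \eta\|c_t\|_2$, i.e.\ $a_t \leqslant \|c_t\|_2$, as required. There is no real obstacle: the only conceptual point is recognizing that under the $\ell_2$-squared regularizer the Bregman/mirror-descent formalism reduces to standard Hilbert-space projection geometry, which is precisely what the name ``Pythagorean'' in the claim is pointing to.
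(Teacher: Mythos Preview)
Your proof is correct and follows essentially the same route as the paper: both reduce the dual update to $\eta a_t=\|y_t-x_t\|_2$ and the OMD update to $\|y_t-x_{t-1}\|_2=\eta\|c_t\|_2$, then invoke the Pythagorean property of the Bregman/Euclidean projection (with $x_{t-1}\in K$ as the competing point) to conclude. If anything, you are more explicit than the paper in unwinding the definitions and handling the shift by $k$.
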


\begin{proof}
From the Pythagorean property of Bregman Projection, we get $B_R(x,y) \leqslant B_R(u,y)$. Now for the given  regularizer $R$, $B_R(x,y)=\frac{1}{2}\| x-y\|_2^2$. Hence plugging $x=x_t$, $y=y_t$ and $u=x_{t-1}$, we derive:
\begin{eqnarray*}
\| y_t - x_t \|_2^2 \leqslant \| y_t - x_{t-1} \|_2^2 \nonumber \\
a_t \leqslant \| c_t \|_2
\end{eqnarray*}
where the last inequality holds due to non-negativity of both sides
\QED
\end{proof}

\end{document}